\newtheorem{theorem}{Theorem}
\newtheorem{lemma}{Lemma}
\pgfplotsset{
    layers/my layer set/.define layer set={
        background,
        main,
        foreground
    }{
    },
    set layers=my layer set,
}
\title{Verifiable and Compositional Reinforcement Learning Systems}
\author {
    Cyrus Neary\textsuperscript{\rm 1},
    Christos Verginis\textsuperscript{\rm 2},
    Murat Cubuktepe\textsuperscript{\rm 3},
    Ufuk Topcu\textsuperscript{\rm 1,3}
}
\pgfplotsset{compat=newest}
\newenvironment{customlegend}[1][]{%
    \begingroup
    \csname pgfplots@init@cleared@structures\endcsname
    \pgfplotsset{#1}%
}{%
    \csname pgfplots@createlegend\endcsname
    \endgroup
}%
\def\addlegendimage{\csname pgfplots@addlegendimage\endcsname}
\begin{document}

\newtheorem{manualtheoreminner}{Theorem}
\newenvironment{manualtheorem}[1]{%
  \renewcommand\themanualtheoreminner{#1}%
  \manualtheoreminner
}{\endmanualtheoreminner}

\maketitle

\newcommand{\defeq}{\vcentcolon=}

\setlength\marginparwidth{80pt}
\newcommand{\colorpar}[3]{\colorbox{#1}{\parbox{#2}{#3}}}
\newcommand{\marginremark}[3]{\marginpar{\colorpar{#2}{\linewidth}{\color{#1}#3}}}
\newcommand{\cn}[1]{\marginremark{red}{white}{\scriptsize{[CN]~ #1}}}
\newcommand{\mc}[1]{\marginremark{blue}{white}{\scriptsize{[MC]~ #1}}}
\newcommand{\cv}[1]{\marginremark{magenta}{white}{\scriptsize{[CV]~ #1}}}

\newcommand{\bernoulliProb}{p}
\newcommand{\rewardRV}{R}
\newcommand{\rewardExpectedVal}{r}

\newcommand{\mdp}{M}
\newcommand{\mdpStateSet}{S}
\newcommand{\mdpState}{s}
\newcommand{\mdpActionSet}{A}
\newcommand{\mdpAction}{a}
\newcommand{\mdpRewardFunction}{R}
\newcommand{\mdpCommonReward}{r}
\newcommand{\mdpTransition}{P}
\newcommand{\mdpDiscount}{\gamma}
\newcommand{\mdpInitialState}{\mdpState_I}
\newcommand{\mdpInitialDist}{\alpha_I}
\newcommand{\mdpCostFunction}{C}

\newcommand{\distribution}{\Delta}

\newcommand{\stateAbstraction}{\phi}

\newcommand{\policy}{\pi}
\newcommand{\targetStateSet}{\mdpStateSet_{targ}}
\newcommand{\initialStateSet}{\mdpStateSet_{init}}
\newcommand{\timeHorizon}{T}
\newcommand{\mdpSuccessState}{\mdpState_{\checkmark}}
\newcommand{\valueFunction}{V}
\newcommand{\terminationTime}{\tau}

\newcommand{\abstractMDP}{\tilde{\mdp}}
\newcommand{\abstractStateSet}{\tilde{\mdpStateSet}}
\newcommand{\abstractState}{\tilde{\mdpState}}
\newcommand{\abstractActionSet}{\tilde{\mdpActionSet}}
\newcommand{\abstractAction}{\tilde{\mdpAction}}
\newcommand{\abstractTransition}{\tilde{\mdpTransition}}
\newcommand{\abstractRewardFunction}{\tilde{\mdpRewardFunction}}
\newcommand{\abstractFailureState}{\abstractState_{\times}}
\newcommand{\abstractSuccessState}{\abstractState_{\checkmark}}
\newcommand{\abstractPolicy}{\mu}
\newcommand{\abstractInitialState}{\tilde{\mdpInitialState}}
\newcommand{\abstractInitialStateSet}{\abstractStateSet_{init}}
\newcommand{\abstractInitDist}{\alpha}
\newcommand{\abstractTargetStateSet}{\tilde{\targetStateSet}}

\newcommand{\boundMDP}{\bar{\mdp}}
\newcommand{\bernoulliProbBound}{\bar{\bernoulliProb}}
\newcommand{\boundMDPReward}{\bar{R}}
\newcommand{\boundMDPTransition}{\bar{\mdpTransition}}

\newcommand{\probThreshold}{P_{threshold}}
\newcommand{\failThreshold}{F_{threshold}}
\newcommand{\rewardThreshold}{R_{threshold}}

\newcommand{\hlmFailProb}{\delta}
\newcommand{\hlmPolicy}{\Tilde{\abstractPolicy}}

\newcommand{\controller}{c}
\newcommand{\controllerSet}{\mathcal{C}}
\newcommand{\controllerInitialStateSet}{\mathcal{I}}
\newcommand{\controllerFinalStateSet}{\mathcal{F}}
\newcommand{\controllerTimeHorizon}{T}
\newcommand{\numControllers}{k}

\newcommand{\successProb}{\sigma}

\newcommand{\eqRelation}{R}

\newcommand{\occupancyVar}{x}

\newcommand{\controllerInfProb}{\Bar{\successProb}}

\newcommand{\numSteps}{N}
\newcommand{\initTrainingSteps}{N_{init}}
\newcommand{\estimationRollouts}{N_{est}}
\newcommand{\trainingSteps}{N_{train}}
\newcommand{\maxTrainingSteps}{N_{max}}
\newcommand{\controllerPerformanceEstimate}{\hat{\successProb}}

\newcommand{\lbList}{\mathcal{L}}
\newcommand{\ubList}{\mathcal{U}}

\newcommand{\perfAwareOptProblem}{\Omega}

 \newcommand\titlesize{\fontsize{8.1pt}{10.2pt}\selectfont}

\begin{abstract}
    We propose a framework for verifiable and compositional reinforcement learning (RL) in which a collection of RL subsystems, each of which learns to accomplish a separate subtask, are composed to achieve an overall task.
    The framework consists of a \textit{high-level} model, represented as a parametric Markov decision process (pMDP) which is used to plan and to analyze compositions of subsystems, and of the collection of \textit{low-level} subsystems themselves.
    By defining interfaces between the subsystems, the framework enables automatic decompositions of task specifications, \textit{e.g., reach a target set of states with a probability of at least 0.95}, into individual subtask specifications, \textit{i.e. achieve the subsystem's exit conditions with at least some minimum probability, given that its entry conditions are met}.
    This in turn allows for the independent training and testing of the subsystems; if they each learn a policy satisfying the appropriate subtask specification, then their composition is guaranteed to satisfy the overall task specification.
    Conversely, if the subtask specifications cannot all be satisfied by the learned policies, we present a method, formulated as the problem of finding an optimal set of parameters in the pMDP, to automatically update the subtask specifications to account for the observed shortcomings.
    The result is an iterative procedure for defining subtask specifications, and for training the subsystems to meet them.
    As an additional benefit, this procedure allows for particularly challenging or important components of an overall task to be identified automatically, and focused on, during training.
    Experimental results demonstrate the presented framework's novel capabilities in both discrete and continuous RL settings.
    A collection of RL subsystems are trained, using proximal policy optimization algorithms, to navigate different portions of a labyrinth environment.
    A cross-labyrinth task specification is then decomposed into subtask specifications.
    Challenging portions of the labyrinth are automatically avoided if their corresponding subsystems cannot learn satisfactory policies within allowed training budgets. 
    Unnecessary subsystems are not trained at all. 
    The result is a compositional RL system that efficiently learns to satisfy task specifications.
    
\end{abstract}
\section{Introduction}
\label{sec:intro}

\begin{figure*}[t]
    \centering
    \input{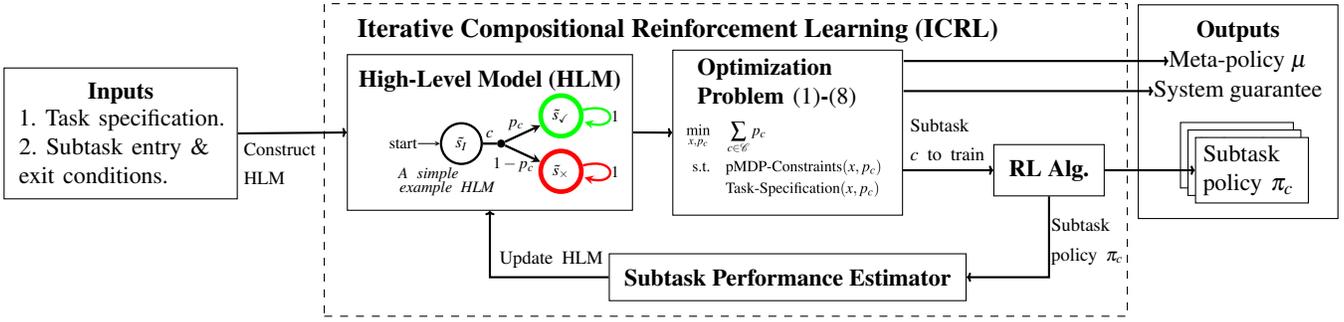}
    \caption{
    An illustration of the proposed framework.
    The task specification and the subtask entry and exit conditions are used to build the \textit{high-level model} (HLM) of the compositional RL system.
    We use the HLM to formulate an optimization problem whose outputs yield a \textit{meta-policy}, the probability of overall task success, and separate specifications for each subtask.
    The subtask specifications are used to select the next subsystem to train using the RL algorithm of choice.
    Estimates of the resulting subsystem policies are then used to update the HLM.
    This iterative process repeats until either the composite system satisfies the task specification, or a user-defined training budget has been exhausted.
    }
    \label{fig:approach_illustration}
\end{figure*} 

Reinforcement learning (RL) algorithms offer tremendous capabilities in systems that work with unknown environments.
However, there remain significant barriers to their deployment in safety-critical engineering applications.
Autonomous vehicles, manufacturing robotics, and power systems management are examples of complex application domains that require strict adherence of the system's behavior to stakeholder requirements.
However, the verification of RL systems is difficult.
This is particularly true of monolithic end-to-end RL approaches; many model-free RL algorithms, for instance, only output the learned policy and its estimated value function, rendering them opaque for verification purposes.
The difficulty of verification is compounded in engineering application domains, which often require large observation and action spaces, and complicated reward functions.

How do we build complex engineering systems we can trust?
Engineering design principles have long prescribed system modularity as a means to reduce the complexity of individual subsystems \cite{haberfellner2019systems, nuseibeh2000requirements}.
By creating well-defined interfaces between subsystems, system-level requirements may be decomposed into component-level ones. 
Conversely, each component may be developed and tested independently, and the satisfaction of component-level requirements may then be used to place assurances on the behavior of the system as a whole.
Building RL systems that incorporate such engineering practices and guarantees is a crucial step toward their widespread deployment.

Toward this end, we develop a framework for verifiable and compositional reinforcement learning.
The framework comprises two levels of abstraction.
The \textit{high level} is used to plan \textit{meta-policies} and to verify their adherence to task specifications, \textit{e.g., reach a particular goal state with a probability of at least 0.9}.
Meta-policies dictate sequences of \textit{subsystems} to execute, each of which is designed to accomplish a specific \textit{subtask}, \textit{i.e. achieve a particular exit condition, given the subsystem is executed from one of its entry conditions}.
We assume a collection of \textit{partially instantiated} subsystems to be given a priori; their entry and exit conditions are known, but the policies they implement are not.
These entry and exit conditions might be defined by pre-existing engineering capabilities, explicitly by a task designer, or by entities within the environment.
At the \textit{low level} of the framework, each subsystem employs RL algorithms to learn policies accomplishing its subtask. 
Figure \ref{fig:approach_illustration} illustrates the major components of the proposed framework.

We model the high level of the framework using a parametric Markov decision process (pMDP)~\cite{cubuktepe2018synthesis,junges2020parameter}.
Each action in the pMDP represents an individual RL subsystem, and the parametric transition probabilities in the pMDP thus represent the likelihoods of outcomes that could occur when the subsystem is executed.
Using sampling-based estimates of subsystem policies, we assign values to the model parameters and use existing MDP techniques for the planning and verification of meta-policies \cite{puterman2014markov, baier2008principles}.
Beyond this capability, the framework offers the following novel features.

\textbf{1. Automatic decomposition of task specifications.}
We formulate, as the problem of finding an optimal set of parameters in the pMDP, a method to automatically decompose the task specification into subtask specifications, allowing for independent learning and verification of the subsystems.

\textbf{2. Learning to satisfy subtask specifications.}
Any RL method can be used to learn the subsystem policies, so long as the learned policies satisfy the relevant subtask specification.
We present a subsystem reward function definition, in terms of the exit conditions of the subsystem, that motivates the learning of policies satisfying the subtask specification.
Furthermore, these subtask specifications provide an \textit{interface} between the subsystems, allowing for the analysis of their compositions. In particular, we guarantee that if each of the learned subsystem policies satisfies its subtask specifications, a composition of them exists satisfying the specifications on the overall task.

\textbf{3. Iterative specification refinement.}
However, if some of the subtask specifications cannot be satisfied by the corresponding learned policies, sampling-based estimates of their behavior are used to update the high-level model.
We present a method to use this information to refine the subtask specifications, in order to better reflect what might realistically be achieved by the subsystems.
This automatic refinement naturally leads to a compositional RL algorithm that iteratively computes subtask specifications, and then trains the corresponding subsystems to achieve them.

\textbf{4. System modularity: prediction and verification in task transfer.}
By providing an interface between the subtasks, the presented framework allows for previously learned subtask policies to be re-used as components of new high-level models, designed to solve different tasks.
Furthermore, the subtask specifications themselves may be re-used to perform verification within these new models, without the need for further training.

Experimental results exemplify these novel capabilities in both discrete and continuous versions of a labyrinth navigation task. 
We use proximal policy optimization algorithms \cite{schulman2017proximal} to train individual subsystems to navigate portions of the environment, which are then composed to complete a cross-labyrinth navigation task.
Through the aforementioned compositional RL algorithm, the task specification is decomposed and challenging portions of the labyrinth are avoided if their corresponding subsystems cannot satisfy their specifications.
\section{The Compositional RL Framework}
\label{sec:problem_statement}

\begin{figure*}[t!]
    \centering
    \begin{subfigure}[t]{0.3\textwidth}
        \centering \newcommand{\pathwidth}{2mm}
\newcommand{\circleradius}{0.5}
\newcommand{\circlecolor}{blue!30!white}
\resizebox{0.78\textwidth}{!}{
\begin{tikzpicture}[scale=1.0]
\definecolor{color0}{rgb}{0.12156862745098,0.466666666666667,0.705882352941177}
\definecolor{color1}{rgb}{0.682352941176471,0.780392156862745,0.909803921568627}
\definecolor{color2}{rgb}{1,0.498039215686275,0.0549019607843137}
\definecolor{color3}{rgb}{1,0.733333333333333,0.470588235294118}
\definecolor{color4}{rgb}{0.172549019607843,0.627450980392157,0.172549019607843}
\definecolor{color5}{rgb}{0.596078431372549,0.874509803921569,0.541176470588235}
\definecolor{color6}{rgb}{0.83921568627451,0.152941176470588,0.156862745098039}
\definecolor{color7}{rgb}{1,0.596078431372549,0.588235294117647}
\definecolor{color8}{rgb}{0.580392156862745,0.403921568627451,0.741176470588235}
\definecolor{color9}{rgb}{0.772549019607843,0.690196078431373,0.835294117647059}
\definecolor{color10}{rgb}{0.549019607843137,0.337254901960784,0.294117647058824}
\definecolor{color11}{rgb}{0.768627450980392,0.611764705882353,0.580392156862745}
\filldraw[fill=black!2!white] (0,0) rectangle (20,20);
\fill[fill=black!50!white] (0,0) rectangle (20,1);
\fill[fill=black!50!white] (0,0) rectangle (1,20);
\fill[fill=black!50!white] (0,19) rectangle (20,20);
\fill[fill=black!50!white] (19,0) rectangle (20,20);
\fill[fill=black!50!white] (0,14) rectangle (3,15);
\fill[fill=black!50!white] (4,14) rectangle (10,15);
\fill[fill=black!50!white] (5,14) rectangle (6,17);
\fill[fill=black!50!white] (5,18) rectangle (6,20);
\fill[fill=black!50!white] (11,14) rectangle (14,15);
\fill[fill=black!50!white] (15,14) rectangle (20,15);
\fill[fill=black!50!white] (8,4) rectangle (9,15);
\fill[fill=black!50!white] (0,9) rectangle (5,10);
\fill[fill=black!50!white] (6,9) rectangle (9,10);
\fill[fill=black!50!white] (0,4) rectangle (3,5);
\filldraw[fill=orange] (2,12) rectangle (5,13);
\filldraw[fill=orange] (6,11) rectangle (8,12);
\filldraw[fill=orange] (6,5) rectangle (8,6);
\filldraw[fill=orange] (3,7) rectangle (5,8);
\fill[fill=black!50!white] (4,4) rectangle (16,5);
\fill[fill=black!50!white] (13,4) rectangle (14,15);
\fill[fill=black!50!white] (17,4) rectangle (20,5);
\node at (1.9, 18.1) [fill=\circlecolor] {\fontsize{50}{60}\selectfont\(\abstractInitialState\)};
\node at (3.0,2.0) [fill=green, rectangle] {\fontsize{50}{60}\selectfont\(\controllerFinalStateSet_{targ}\)};
\filldraw[fill=\circlecolor] (3.5, 14.5) circle (\circleradius);
\filldraw[fill=\circlecolor] (5.5, 17.5) circle (\circleradius);
\filldraw[fill=\circlecolor] (10.5, 14.5) circle (\circleradius);
\filldraw[fill=\circlecolor] (14.5, 14.5) circle (\circleradius);
\filldraw[fill=\circlecolor] (5.5, 9.5) circle (\circleradius);
\filldraw[fill=\circlecolor] (3.5, 4.5) circle (\circleradius);
\filldraw[fill=\circlecolor] (12.5, 5.5) circle (\circleradius);
\filldraw[fill=\circlecolor] (16.5, 4.5) circle (\circleradius);
\filldraw[fill=\circlecolor] (10.5, 2.0) circle (\circleradius);

\draw[->, color=color0, line width=\pathwidth] (2.0, 17.5) -- (2.0, 15.5);
\draw[color=color0, line width=\pathwidth] (2.0, 15.5) -- (3.5, 15.5);
\draw[->, color=color0, line width=\pathwidth] (3.5, 15.5) -- (3.5, 15.0);
\node at (2.7, 16.5) [color=color0] {\fontsize{40}{60}\selectfont\(\controller_0\)};
\draw[->, color=color1, line width=\pathwidth] (3.0, 18.5) -- (4.5, 18.5);
\draw[color=color1, line width=\pathwidth] (4.5, 18.5) -- (4.5, 17.5);
\draw[->, color=color1, line width=\pathwidth] (4.5, 17.5) -- (5.0, 17.5);
\node at (3.7, 17.5) [color=color1] {\fontsize{40}{60}\selectfont\(\controller_1\)};
\draw[color=color2, line width=\pathwidth] (6.0, 17.5) -- (6.5, 17.5);
\draw[->, color=color2, line width=\pathwidth] (6.5, 17.5) -- (6.5, 15.5);
\draw[color=color2, line width=\pathwidth] (6.5, 15.5) -- (10.5, 15.5);
\draw[->, color=color2, line width=\pathwidth] (10.5, 15.5) -- (10.5, 15.0);
\node at (8.5, 16.2) [color=color2] {\fontsize{40}{60}\selectfont\(\controller_2\)};
\draw[->, color=color3, line width=\pathwidth] (6.0, 17.5) -- (14.5, 17.5);
\draw[->, color=color3, line width=\pathwidth] (14.5, 17.5) -- (14.5, 15.0);
\node at (15.3, 16.5) [color=color3] {\fontsize{40}{60}\selectfont\(\controller_3\)};
\draw[color=color4, line width=\pathwidth] (3.5, 14.0) -- (3.5, 13.5);
\draw[->, color=color4, line width=\pathwidth] (3.5, 13.5) -- (5.5, 13.5);
\draw[->, color=color4, line width=\pathwidth] (5.5, 13.5) -- (5.5, 10.0);
\node at (4.7, 11.0) [color=color4] {\fontsize{40}{60}\selectfont\(\controller_4\)};
\draw[->, color=color5, line width=\pathwidth] (5.5, 9.0) -- (5.5, 5.5);
\draw[color=color5, line width=\pathwidth] (5.5, 5.5) -- (3.5, 5.5);
\draw[->, color=color5, line width=\pathwidth] (3.5, 5.5) -- (3.5, 5.0);
\node at (6.3, 8.0) [color=color5] {\fontsize{40}{60}\selectfont\(\controller_5\)};
\draw[color=color6, line width=\pathwidth] (10.5, 14.0) -- (10.5, 13.5);
\draw[color=color6, line width=\pathwidth] (10.5, 13.5) -- (9.5, 13.5);
\draw[->, color=color6, line width=\pathwidth] (9.5, 13.5) -- (9.5, 9.5);
\draw[->, color=color6, line width=\pathwidth] (9.5, 9.5) -- (9.5, 5.5);
\draw[->, color=color6, line width=\pathwidth] (9.5, 5.5) -- (12.0, 5.5);
\node at (10.3, 12.0) [color=color6] {\fontsize{40}{60}\selectfont\(\controller_6\)};
\draw[->, color=color7, line width=\pathwidth] (12.5, 6.0) -- (12.5, 9.5);
\draw[->, color=color7, line width=\pathwidth] (12.5, 9.5) -- (12.5, 13.5);
\draw[color=color7, line width=\pathwidth] (12.5, 13.5) -- (10.5, 13.5);
\draw[color=color7, line width=\pathwidth] (10.5, 13.5) -- (10.5, 14.0);
\node at (11.7, 8.0) [color7] {\fontsize{40}{60}\selectfont\(\controller_7\)};
\draw[->, color=color8, line width=\pathwidth] (14.5, 14.0) -- (14.5, 9.5);
\draw[->, color=color8, line width=\pathwidth] (14.5, 9.5) -- (16.5, 9.5);
\draw[->, color=color8, line width=\pathwidth] (16.5, 9.5) -- (16.5, 5.0);
\node at (17.0, 10.0) [color=color8] {\fontsize{40}{60}\selectfont\(\controller_8\)};
\draw[->, color=color9, line width=\pathwidth] (3.5, 4.0) -- (3.5, 3.0);
\node at (4.3, 3.5) [color=color9] {\fontsize{40}{60}\selectfont\(\controller_9\)};
\draw[->, color=color10, line width=\pathwidth] (16.5, 4.0) -- (16.5, 2.0);
\draw[->, color=color10, line width=\pathwidth] (16.5, 2.0) -- (11.0, 2.0);
\node at (13.5, 2.7) [color=color10] {\fontsize{40}{60}\selectfont\(\controller_{10}\)};
\draw[->, color=color11, line width=\pathwidth] (10.0, 2.0) -- (5.0, 2.0);
\node at (7.5, 2.7) [color=color11] {\fontsize{40}{60}\selectfont\(\controller_{11}\)};
\end{tikzpicture}%
}%
        \caption{The labyrinth task environment.}
        \vspace*{0.5cm}
        \label{fig:labyrinth_gridworld}
    \end{subfigure}%
    ~
    \hspace*{18mm}
    \begin{subfigure}[t]{0.45\textwidth}
        \centering 

\tikzstyle{branch}=[fill,shape=circle,minimum size=5pt,inner sep=0pt]
\def\horizontaldistance{2.5cm}
\def\verticaldistance{2.0cm}
\def\nodedistance{2.5cm}
\def\branchdist{0.8cm}

\def\stateOutlineThickness{0.5mm}
\def\edgeThickness{0.5mm}

\tikzset{auto, ->, >=stealth, node distance=\nodedistance, node/.style={scale=0.8, minimum size=0pt, inner sep=0pt}}

\resizebox{1.0\textwidth}{!}{
\begin{tikzpicture}[scale=1.0]
    \node[state, line width=\stateOutlineThickness] (s_init) {\(\abstractInitialState\)};
    \path (s_init)+(\horizontaldistance, 0.0) node (s0) [state, line width=\stateOutlineThickness] {\(\abstractState_0\)};
    \path (s_init)+(0.0, -\verticaldistance) node (s1) [state, line width=\stateOutlineThickness] {\(\abstractState_1\)};
    \path (s1)+(0.0, -\verticaldistance) node (s4) [state, line width=\stateOutlineThickness] {\(\abstractState_4\)};
    \path (s4)+(\horizontaldistance, 0.0) node (s5) [state, line width=\stateOutlineThickness] {\(\abstractState_5\)};
    
    \path (s0)+(1.8cm, -1.0cm) node (s2) [state, line width=\stateOutlineThickness] {\(\abstractState_2\)};
    \path (s2)+(0.0, -\verticaldistance) node (s6) [state, line width=\stateOutlineThickness] {\(\abstractState_6\)};
    
    \path (s0)+(5.0cm, 0.0) node (s3) [state, line width=\stateOutlineThickness] {\(\abstractState_3\)};
    \path (s3)+(0.0, -\verticaldistance) node (s7) [state, line width=\stateOutlineThickness] {\(\abstractState_7\)};
    \path (s7)+(0.0, -\verticaldistance) node (s8) [state, line width=\stateOutlineThickness] {\(\abstractState_8\)};
    
    \path (s1)+(2.2cm, 0.0) node (s_fail) [state, draw=red, line width=1.0mm] {\(\abstractFailureState\)};
    \path (s_fail)+(3.7cm, 0.0) node (s_fail2) [state, draw=red, line width=1.0mm] {\(\abstractFailureState\)};
    \path (s5)+(3.0cm, 0.0) node (s_goal) [state, draw=green, line width=1.0mm] {\(\abstractSuccessState\)};
    
    \path (s_init)+(\branchdist, 0.0) node [branch] (c1) {};
    \path [draw, line width=\edgeThickness, -] (s_init.east) -- node [above] {\(\controller_1\)} (c1);
    \path [draw, ->, line width=\edgeThickness] (c1) -- node [above] {\(\bernoulliProb_{\controller_1}\)} (s0);
    \path [draw=red, ->, line width=\edgeThickness] (c1) -- node [above] {} (s_fail);
    \path (s_init)+(0.0, -\branchdist) node [branch] (c0) {};
    \path [draw, -, line width=\edgeThickness] (s_init.south) -- node [left] {\(\controller_0\)} (c0);
    \path [draw, ->, line width=\edgeThickness] (c0) -- node [left] {\(\bernoulliProb_{\controller_0}\)} (s1);
    \path [draw=red, ->, line width=\edgeThickness] (c0) -- node [above] {} (s_fail);
    \path (s0)+(0.87cm, -0.33cm) node [branch] (c2) {};
    \path [draw, -, line width=\edgeThickness] (s0.east) -- node [below] {\(\controller_2\)} (c2);
    \path [draw, ->, line width=\edgeThickness] (c2) -- node [below] {\(\bernoulliProb_{\controller_2}\)} (s2);
    \path [draw=red, ->, line width=\edgeThickness] (c2) -- node [above] {} (s_fail);
    \path (s0)+(2.2cm, 0.0cm) node [branch] (c3) {};
    \path [draw, -, line width=\edgeThickness] (s0.east) -- node [above] {\(\controller_3\)} (c3);
    \path [draw, ->, line width=\edgeThickness] (c3) -- node [above] {\(\bernoulliProb_{\controller_3}\)} (s3);
    \path (c3)+(-0.5, -0.5) node (c3_fail) {};
    \path [draw=red, ->, line width=\edgeThickness] (c3) -- node [above] {} (s_fail2);
    \path (s2)+(-0.4cm, -\branchdist) node [branch] (c6) {};
    \path [draw, -, line width=\edgeThickness] (s2.south) -- node [left] {\(\controller_6\)} (c6);
    \path [draw, ->, line width=\edgeThickness] (c6) -- node [left] {\(\bernoulliProb_{\controller_6}\)} (s6.north);
    \path [draw=red, ->, line width=\edgeThickness] (c6) -- node [above] {} (s_fail);
    \path (s6)+(0.4cm, 0.8cm) node [branch] (c7) {};
    \path [draw, -, line width=\edgeThickness] (s6.north) -- node [right] {\(\controller_7\)} (c7);
    \path [draw, ->, line width=\edgeThickness] (c7) -- node [right] {\(\bernoulliProb_{\controller_7}\)} (s2.south);
    \path (c7)+(-0.6, 0.05) node (c7_fail) {};
    \path [draw=red, ->, line width=\edgeThickness] (c7) -- node [above] {} (s_fail2);
    \path (s1)+(0.0, -\branchdist) node [branch] (c4) {};
    \path [draw, -, line width=\edgeThickness] (s1.south) -- node [left] {\(\controller_4\)} (c4);
    \path [draw, ->, line width=\edgeThickness] (c4) -- node [left] {\(\bernoulliProb_{\controller_4}\)} (s4);
    \path [draw=red, ->, line width=\edgeThickness] (c4) -- node [above] {} (s_fail);
    \path (s4)+(\branchdist, 0.0) node [branch] (c5) {};
    \path [draw, -, line width=\edgeThickness] (s4.east) -- node [below] {\(\controller_5\)} (c5);
    \path [draw, ->, line width=\edgeThickness] (c5) -- node [below] {\(\bernoulliProb_{\controller_5}\)} (s5);
    \path [draw=red, ->, line width=\edgeThickness] (c5) -- node [above] {} (s_fail);
    \path (s5)+(\branchdist, 0.0) node [branch] (c9) {};
    \path [draw, -, line width=\edgeThickness] (s5.east) -- node [below] {\(\controller_9\)} (c9);
    \path [draw, ->, line width=\edgeThickness] (c9) -- node [below] {\(\bernoulliProb_{\controller_9}\)} (s_goal);
    \path [draw=red, ->, line width=\edgeThickness] (c9) -- node [above] {} (s_fail);
    \path (s3)+(0.0, -\branchdist) node [branch] (c8) {};
    \path [draw, -, line width=\edgeThickness] (s3.south) -- node [right] {\(\controller_8\)} (c8);
    \path [draw, ->, line width=\edgeThickness] (c8) -- node [right] {\(\bernoulliProb_{\controller_8}\)} (s7);
    \path (c8)+(-1.0, -0.3) node (c8_fail) {};
    \path [draw=red, ->, line width=\edgeThickness] (c8) -- node [above] {} (s_fail2);
    
    \path (s7)+(0.0, -\branchdist) node [branch] (c10) {};
    \path [draw, -, line width=\edgeThickness] (s7.south) -- node [right] {\(\controller_{10}\)} (c10);
    \path [draw, ->, line width=\edgeThickness] (c10) -- node [right] {\(\bernoulliProb_{\controller_{10}}\)} (s8);
    \path (c10)+(-1.0, 0.2) node (c10_fail) {};
    \path [draw=red, ->, line width=\edgeThickness] (c10) -- node [above] {} (s_fail2);
    \path (s8)+(-\branchdist, 0.0) node [branch] (c11) {};
    \path [draw, -, line width=\edgeThickness] (s8.west) -- node [below] {\(\controller_{11}\)} (c11);
    \path [draw, ->, line width=\edgeThickness] (c11) -- node [below] {\(\bernoulliProb_{\controller_{11}}\)} (s_goal);
    \path (c11)+(-0.8, 0.5) node (c11_fail) {};
    \path [draw=red, ->, line width=\edgeThickness] (c11) -- node [above] {} (s_fail2);
    
    \path (s_fail) edge [loop above, draw=red, line width=\edgeThickness] node {1} (s_fail);
    \path (s_fail2) edge [loop above, draw=red, line width=\edgeThickness] node {1} (s_fail2);
    \path (s_goal) edge [loop above, draw=green, line width=\edgeThickness] node {1} (s_goal);
\end{tikzpicture}
}
        \vspace*{-0.4cm}
        \caption{The HLM corresponding to the labyrinth example.}
        \vspace*{0.5cm}
        \label{fig:labyrinth_HLM}
    \end{subfigure}
    \caption{An example labyrinth navigation task. Figure (a) illustrates the environment, as well as an example collection of subsystems, represented by the colored paths. Entry and exit conditions for the various subsystems are shown as blue circles. Figure (b) illustrates the corresponding HLM. Each subsystem \(\controller\) causes a transition to its successor state with probability \(\bernoulliProb_{\controller}\). Otherwise, the HLM transitions to the failure state \(\abstractFailureState\) with probability \(1 - \bernoulliProb_{\controller}\), visualized by the red transitions.} 
\end{figure*}
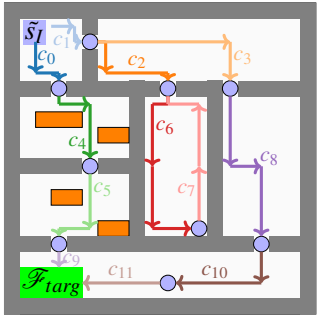
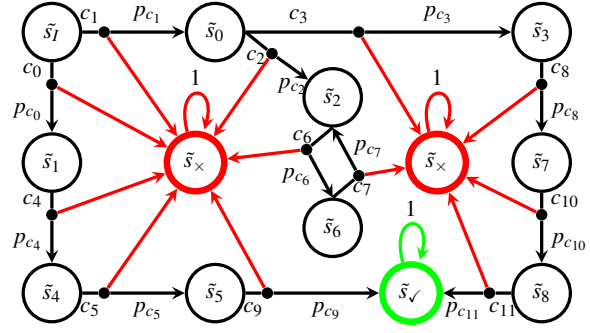

To provide intuitive examples of the notions of tasks, subtasks, systems, and subsystems, we consider the example labyrinth environment shown in Figure \ref{fig:labyrinth_gridworld}. 
The \textit{system} executes its constituent \textit{subsystems} in this environment to complete an overall task.
The \textit{task} is to safely navigate from the labyrinth's initial state in the top left corner to the goal state in the bottom left corner. Satisfaction of the \textit{task specification} requires that the system successfully completes the task with a probability of at least \(0.95\).
As an added difficulty, lava exists within some of the rooms, represented in the figure by the orange rectangles. 
If the lava is touched, the task is automatically failed. 
This task is naturally decomposed into separate \textit{subtasks}, each of which navigates an individual room, and is executed by a separate subsystem.

\subsubsection{Preliminaries.}
We model the task environment as a Markov decision process (MDP), which is defined by a tuple \(\mdp = (\mdpStateSet, \mdpActionSet, \mdpTransition)\). 
Here, \(\mdpStateSet\) is a set of states, \(\mdpActionSet\) is a set of actions, and \(\mdpTransition : \mdpStateSet \times \mdpActionSet \times \mdpStateSet \to [0,1]\) is a transition probability function.
A stationary policy \(\policy\) within the MDP is a function \(\policy : \mdpStateSet \times \mdpActionSet \to [0,1]\) such that \(\sum_{\mdpAction \in \mdpActionSet} \policy(\mdpState, \mdpAction) = 1\) for every \(\mdpState \in \mdpStateSet\).
Intuitively, \(\policy(\mdpState, \mdpAction)\) assigns the probability of taking action \(\mdpAction\) from state \(\mdpState\) under policy \(\policy\).
Given an MDP \(\mdp\), a policy \(\policy\), and a target set of states \(\targetStateSet \subseteq \mdpStateSet\), we define \(\mathbb{P}^{\mdpState}_{\mdp, \policy}(\Diamond \targetStateSet)\) to be the probability of eventually reaching some state \(\mdpState' \in \targetStateSet\), beginning from the initial state \(\mdpState\), under policy \(\policy\).
Similarly, \(\mathbb{P}^{\mdpState}_{\mdp, \policy}(\Diamond_{\leq \timeHorizon} \targetStateSet)\) denotes the probability of reaching the target set from state \(\mdpState\) within some finite time horizon \(\timeHorizon\).

The framework we present is agnostic to the implementation details of the RL algorithms that interact with the low-level environment.
As such, \(\mdpStateSet\) and \(\mdpActionSet\) can either be uncountably infinite subsets of Euclidean space, or they can be countable sets indexing the states and actions. 
Our experiments examine both cases.
For notational simplicity, we present the framework for countable sets \(\mdpStateSet\) and \(\mdpActionSet\).

\subsubsection{RL Subsystems and Subtasks.}
We define each RL subsystem \(\controller\) acting within the environment  by the tuple \(\controller = (\controllerInitialStateSet_{\controller}, \controllerFinalStateSet_{\controller}, \controllerTimeHorizon_{\controller}, \policy_{\controller})\).
Here, \(\controllerInitialStateSet_{\controller} \subseteq \mdpStateSet\) is a set defining the subsystem's \textit{entry conditions}, \(\controllerFinalStateSet_{\controller} \subseteq \mdpStateSet\) is a set representing the subsystem's \textit{exit conditions}, and \(\timeHorizon_{\controller} \in \mathbb{N}\) is the subsystem's allowed \textit{time horizon}.
The \textit{subtask} associated with each subsystem, is to navigate from any entry condition \(\mdpState \in \controllerInitialStateSet_{\controller}\) to any exit condition \(\mdpState' \in \controllerFinalStateSet_{\controller}\) within the subsystem's time horizon \(\timeHorizon_{\controller}\). 
The time horizon is included to ensure that the compositional system will complete its task in finite time.
We assume that each subsystem may only be \textit{executed}, or begun, from an entry condition \(\mdpState \in \controllerInitialStateSet_{\controller}\) and that its execution ends either when it achieves an exit condition \(\mdpState \in \controllerFinalStateSet_{\controller}\), or when it runs out of time.
Finally, \(\policy_{\controller} : \mdpStateSet \times \mdpActionSet \to [0,1]\) is the policy that the component implements to complete this objective.

For notational convenience, we define \(\successProb_{\policy_{\controller}}^{\controller}(\mdpState) \defeq \mathbb{P}^{\mdpState}_{\mdp, \policy_{\controller}}(\Diamond_{\leq \timeHorizon_{\controller}}\controllerFinalStateSet_{\controller})\).
A \textit{subtask specification}, is then defined as the requirement that \(\successProb_{\policy_{\controller}}^{\controller}(s) \geq \bernoulliProb_{\controller}\) for every entry condition \(\mdpState \in \controllerInitialStateSet_{\controller}\) of the subsystem. Here, \(\bernoulliProb_{\controller} \in [0,1]\) is a value representing the minimum allowable probability of the subtask success. 
We note that such reachability-based task specifications are very expressive. Temporal logic specifications can be expressed as reachability specifications in a so-called product MDP \cite{baier2008principles, hahn2019omega}.

We say a subsystem \(\controller\) is \textit{partially instantiated} when \(\controllerInitialStateSet_{\controller}\), \(\controllerFinalStateSet_{\controller}\), and \(\timeHorizon_{\controller}\) are defined, but its policy \(\policy_{\controller}\) is not.
We define a collection \(\controllerSet = \{\controller_1, \controller_2, ..., \controller_{\numControllers}\}\) of subsystems to be \textit{composable}, if and only if for every \(i,j \in \{1,2,\ldots, \numControllers\}\), either \(\controllerFinalStateSet_{\controller_i} \subseteq \controllerInitialStateSet_{\controller_j}\) or \(\controllerFinalStateSet_{\controller_i} \cap \controllerInitialStateSet_{\controller_j} = \emptyset\).
In words, subsystems are composable when the set of exit conditions of each subsystem is a subset of all the sets of entry conditions that it intersects.
This ensures that regardless of the specific exit condition \(\mdpState \in \controllerFinalStateSet_{\controller}\) in which subsystem \(\controller\) terminates, \(\mdpState\) will be a valid entry condition for the \textit{same} collection of other subsystems.

\subsubsection{Compositions of RL Subsystems.}
Compositions of subsystems are specified by \textit{meta-policies} \(\abstractPolicy : \mdpStateSet \times \controllerSet \to [0,1]\), which assign probability values to the execution of different subsystems, given the current environment state \(\mdpState \in \mdpStateSet\).
So, execution of the composite system occurs as follows.
From a given state \(\mdpState\), the meta-policy is used to select a subsystem \(\controller\) to execute. 
The subsystem's policy \(\policy_{\controller}\) is then followed until it either reaches an exit condition \(\mdpState' \in \controllerFinalStateSet_{\controller}\), or it reaches the end of its time horizon \(\timeHorizon_{\controller}\). 
If the former is true, the meta-policy selects the next subsystem to execute from \(\mdpState'\), and the process repeats.
Conversely, if the latter is true, the subsystem has failed to complete its subtask in time, and the execution of the meta-policy stops.
In the labyrinth example, the meta-policy selects which rooms to pass through, while the subsystems policies navigate the individual rooms.

The \textit{task} of the composite system is, beginning from an initial state \(\mdpInitialState\), to eventually reach a particular target exit condition \(\controllerFinalStateSet_{targ} \subseteq \mdpStateSet\). 
We assume that  \(\controllerFinalStateSet_{targ}\) is equivalent to \(\controllerFinalStateSet_{\controller}\) for at least one of the subsystems. 
That is, there is some subsystem \(\controller \in \controllerSet\) such that \(\controllerFinalStateSet_{targ} = \controllerFinalStateSet_{\controller}\).
Furthermore, to simplify theoretical analysis, we assume that for every \(\controller \in \controllerSet\), either \(\controllerFinalStateSet_{\controller} = \controllerFinalStateSet_{targ}\) or \(\controllerFinalStateSet_{\controller} \cap \controllerFinalStateSet_{targ} = \emptyset\).
This assumption removes ambiguity as to whether or not completion of a given subtask results in the immediate completion of the system's task.
Finally, we assume that at least one subsystem \(\controller\) can be executed from the initial state \(\mdpInitialState\), i.e. there exists a subsystem \(\controller \in \controllerSet\) such that \(\mdpInitialState \in \controllerInitialStateSet_{\controller}\).
We say that the execution of a meta-policy reaches the target set \(\controllerFinalStateSet_{targ}\), when one of the subsystems \(\controller\) with \(\controllerFinalStateSet_{\controller} = \controllerFinalStateSet_{targ}\) is executed, and successfully completes its subtask. With a slight abuse of notation, we denote the probability of eventually reaching the target set under meta-policy \(\abstractPolicy\) by \(\mathbb{P}_{\mdp, \abstractPolicy}^{\mdpInitialState}(\Diamond \controllerFinalStateSet_{targ})\).

A \textit{task specification} places a requirement on the probability of the compositional RL system reaching \(\controllerFinalStateSet_{targ}\). That is, for some allowable failure probability \(\hlmFailProb \in [0,1]\), the task specification is satisfied if \(\mathbb{P}_{\mdp, \abstractPolicy}^{\mdpInitialState}(\Diamond \controllerFinalStateSet_{targ}) \geq 1 - \hlmFailProb\).
With these definitions in place, we now deliver our problem statement.

\textbf{Problem Statement. }\textit{Given an allowable failure probability \(\hlmFailProb \in [0,1]\), an initial state \(\mdpInitialState\), a target set \(\controllerFinalStateSet_{targ}\), and a partially instantiated collection \(\controllerSet\) of composable subsystems, learn policies \(\policy_{\controller}\) for each subsystem \(\controller \in \controllerSet\) and compute a meta-policy \(\abstractPolicy\) such that \(\mathbb{P}^{\mdpInitialState}_{\mdp, \abstractPolicy}(\Diamond \controllerFinalStateSet_{targ}) \geq 1 - \hlmFailProb\).}
\section{The High-Level Decision-Making Model}\label{sec:hlm}
We now introduce the high-level model (HLM) of the compositional RL framework, which is used to compute meta-policies, and to decompose task specifications into subtask specifications to be satisfied by the individual subsystems.

\subsubsection{Defining the High-Level Model (HLM).}
To construct the HLM, we use a given collection \(\controllerSet = \{\controller_1, \controller_2, \ldots, \controller_{\numControllers}\}\) of partially instantiated subsystems, an initial state \(\mdpInitialState\), and a target set \(\controllerFinalStateSet_{targ}\).
We begin by defining a state abstraction, which groups together environment states in order to define the state space of the HLM.
To do so, we define the equivalence relation \(\eqRelation \subseteq \mdpStateSet \times \mdpStateSet\) such that \((\mdpState, \mdpState') \in \eqRelation\) if and only if the following two conditions hold.
\begin{align*}
	\begin{array}{l}
		 \text{1. For every \(\controller \in \controllerSet, \mdpState \in \controllerInitialStateSet_{\controller}\) if and only if \(\mdpState' \in \controllerInitialStateSet_{\controller}\), and}, \\
		 \text{2. \(\mdpState \in \controllerFinalStateSet_{targ}\) if and only if \(\mdpState' \in \controllerFinalStateSet_{targ}\).}
	\end{array}
\end{align*}%
The equivalence class of any state \(\mdpState \in \mdpStateSet\) under equivalence relation \(\eqRelation\) is given by \([\mdpState]_{\eqRelation} = \{\mdpState' \in \mdpStateSet | (\mdpState, \mdpState') \in \eqRelation\}\). 
The quotient set of \(\mdpStateSet\) by \(\eqRelation\) is defined as the set of all equivalence classes \(\mdpStateSet /_{\eqRelation} = \{[\mdpState]_{\eqRelation} | \mdpState\in \mdpStateSet\}\).
Intuitively, this equivalence relation groups together all the states in the target set, and it also groups together states that are entry conditions to the same subset of subsystems.

We may now define the HLM corresponding to the collection \(\controllerSet\) by the parametric MDP \(\abstractMDP = (\abstractStateSet, \abstractInitialState, \abstractSuccessState, \abstractFailureState, \controllerSet, \abstractTransition)\).
Here, the high-level states \(\abstractStateSet\) are defined to be \(\mdpStateSet/_{\eqRelation}\); states in the HLM correspond to equivalence classes of environment states.
The initial state \(\abstractInitialState\) of the HLM is defined as \(\abstractInitialState = [\mdpInitialState]_{\eqRelation}\), the equivalence class of the environment's initial state. 
The \textit{goal state} \(\abstractSuccessState \in \abstractStateSet\) is similarly defined as \([\mdpState]_{\eqRelation}\) such that \(\mdpState \in \controllerFinalStateSet_{targ}\). 
Recall that \(\controllerFinalStateSet_{targ} = \controllerFinalStateSet_{\controller}\) for at least one of the subsystems \(\controller \in \controllerSet\).
Finally, the \textit{failure state} \(\abstractFailureState \in \abstractStateSet\) is defined as \([\mdpState]_{\eqRelation}\) such that \(\mdpState \in \mdpStateSet \setminus [\bigcup_{\controller \in \controllerSet} \controllerInitialStateSet_{\controller}] \cup \controllerFinalStateSet_{targ}\), i.e., the equivalence class of states \textit{not} belonging to the initial states of any component, or to the target set.

As an example, Figure \ref{fig:labyrinth_HLM} illustrates the HLM corresponding to the collection of subsystems from Figure \ref{fig:labyrinth_gridworld}. The overlapping entry and exit conditions, represented by the blue circles in Figure \ref{fig:labyrinth_gridworld}, define the states of the HLM. The target set \(\controllerFinalStateSet_{targ}\) defines the HLM's goal state \(\abstractSuccessState\), and all other environment states are absorbed into the failure state \(\abstractFailureState\).

The collection of subsystems \(\controllerSet\) defines the HLM's set of actions. By definition of the equivalence relation \(\eqRelation\), for every HLM state \(\abstractState \in \abstractStateSet\) there is a well-defined subset of the subsystems \(\controllerSet(\abstractState) \subseteq \controllerSet\) that can be executed.
That is, for every environment state \(\mdpState \in \abstractState\), \(\mdpState \in \controllerInitialStateSet_{\controller}\) for all \(\controller \in \controllerSet(\abstractState)\).
We define \(\controllerSet(\abstractState)\) as the set of \textit{available subsystems} at high-level state \(\abstractState\).

Furthermore, consider any subsystem \(\controller \in \controllerSet(\abstractState)\). 
As a direct result of the definition of equivalence relation \(\eqRelation\) and of the subsystems in collection \(\controllerSet\) being composable, every state \(\mdpState\) within set \(\controllerFinalStateSet_{\controller}\) belongs to the \textit{same} equivalence class \([\mdpState]_{\eqRelation}\).
In other words, we may uniquely define the successor HLM state of any component \(\controller \in \controllerSet\) as \(succ(\controller) = [\mdpState]_{\eqRelation}\) such that \(\mdpState \in \controllerFinalStateSet_{\controller}\). We then construct the HLM transition probability function in terms of parameters \(\bernoulliProb_{\controller} \in [0,1]\) as follows.%
\begin{align*}
\abstractTransition(\abstractState, \controller, \abstractState') = \begin{cases} 
  \bernoulliProb_{\controller}, & if\; \; \controller \in \controllerSet(\abstractState), \;\; \abstractState' = succ(\controller) \\
  1 - \bernoulliProb_{\controller}, & if \; \; \controller\in \controllerSet(\abstractState), \; \; \abstractState' = \abstractFailureState\\
  0, & \text{Otherwise}
\end{cases}\end{align*}%
The interpretation of this definition of \(\abstractTransition\) is as follows. 
After selecting component \(\controller \in \controllerSet(\abstractState)\) from HLM state \(\abstractState\), the component either succeeds in reaching an exit condition  \(\mdpState \in \controllerFinalStateSet_{\controller}\) within its time horizon \(\timeHorizon_{\controller}\) with probability \(\bernoulliProb_{\controller}\), resulting in an HLM transition to \(succ(\controller)\), or it fails to do so with probability \(1 - \bernoulliProb_{\controller}\), resulting in a transition to the HLM failure state \(\abstractFailureState\). 

The parameters \(\bernoulliProb_{\controller}\) may thus be interpreted as estimates of the probabilities that the subsystems complete their subtasks, given they are executed from one of their entry conditions.
Their values come either from empirical rollouts of learned subsystem policies \(\policy_{\controller}\), or as the solution to the aforementioned automatic decomposition of the task specification, which is discussed further below.

\subsubsection{Relating the HLM to Compositions of RL Subsystems.}
We note that while parameters \(\bernoulliProb_{\controller}\) are meant to estimate the probabilities of successful subtask completion, they cannot capture these probabilities exactly.
In reality, while parameter \(\bernoulliProb_{\controller}\) is constant, it's possible for this probability to vary, given the entry condition \(\mdpState \in \controllerInitialStateSet_{\controller}\) from which the component is executed.
However, the simplicity of the presented parametrization of \(\abstractTransition\) enables tractable solutions to planning and verification problems in \(\abstractMDP\).
Furthermore, by establishing relationships between policies in \(\abstractMDP\), and meta-policies composing RL subsystems, the HLM becomes practically useful in the analysis of composite RL systems.

Towards this idea, we note that any stationary policy \(\hlmPolicy : \abstractStateSet \times \controllerSet \to [0,1]\) acting in HLM \(\abstractMDP\) defines a unique compositional meta-policy \(\abstractPolicy : \mdpStateSet \times \controllerSet \to [0,1]\) as follows: for any environment state \(\mdpState\) and component \(\controller\), define  \(\abstractPolicy(\mdpState, \controller) \defeq \hlmPolicy([\mdpState]_{\eqRelation}, \controller)\).
So, solutions to planning problems in \(\abstractMDP\) can be used directly as meta-policies to specify compositions of the RL subsystems. 
Of particular interest, is the problem of computing an HLM policy \(\hlmPolicy\) that maximizes \(\mathbb{P}^{\abstractInitialState}_{\abstractMDP, \hlmPolicy}(\Diamond \abstractSuccessState)\), the probability of eventually reaching the goal state \(\abstractSuccessState\) from the HLM's initial state \(\abstractInitialState\).
Theorem \ref{thm:hlm_bounds_true_performance} relates this probability to the corresponding meta-policy's probability of completing its task, \(\mathbb{P}_{\mdp, \abstractPolicy}^{\mdpInitialState}(\Diamond \controllerFinalStateSet_{targ})\), in the environment.%
\begin{theorem}
\label{thm:hlm_bounds_true_performance}
Let \(\controllerSet = \{\controller_1, \controller_2, ..., \controller_{\numControllers}\}\) be a collection of composable subsystems with respect to initial state \(\mdpInitialState\) and target set \(\controllerFinalStateSet_{targ}\) within the environment MDP \(\mdp\). Define \(\abstractMDP\) to be the corresponding HLM and let \(\hlmPolicy\) be a policy in \(\abstractMDP\). If, for every subsystem \(\controller \in \controllerSet\) and for every entry condition \(\mdpState \in \controllerInitialStateSet_{\controller}\), \(\successProb_{\policy_{\controller}}^{\controller}(\mdpState) \geq \bernoulliProb_{\controller}\), then \(\mathbb{P}^{\mdpInitialState}_{\mdp, \abstractPolicy}(\Diamond \controllerFinalStateSet_{targ}) \geq \mathbb{P}^{\abstractInitialState}_{\abstractMDP, \hlmPolicy}(\Diamond \abstractSuccessState)\).
\end{theorem}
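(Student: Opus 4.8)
The plan is to treat each subsystem execution as a single ``macro-step'' and to compare the two reachability probabilities through a value-iteration argument. First I would characterize the right-hand side: writing $V_{\abstractMDP}(\abstractState) \defeq \mathbb{P}^{\abstractState}_{\abstractMDP, \hlmPolicy}(\Diamond \abstractSuccessState)$, these quantities form the least fixed point of the Bellman operator for reachability in $\abstractMDP$, with $V_{\abstractMDP}(\abstractSuccessState) = 1$, $V_{\abstractMDP}(\abstractFailureState) = 0$, and $V_{\abstractMDP}(\abstractState) = \sum_{\controller \in \controllerSet(\abstractState)} \hlmPolicy(\abstractState, \controller)\, \bernoulliProb_{\controller}\, V_{\abstractMDP}(succ(\controller))$ for all other $\abstractState$. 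On the environment side I would set $W(\mdpState) \defeq \mathbb{P}^{\mdpState}_{\mdp, \abstractPolicy}(\Diamond \controllerFinalStateSet_{targ})$ and, for a non-target entry condition $\mdpState \in \abstractState$, express it as $W(\mdpState) = \sum_{\controller \in \controllerSet(\abstractState)} \hlmPolicy(\abstractState, \controller) \sum_{\mdpState' \in \controllerFinalStateSet_{\controller}} \rho_{\controller}(\mdpState, \mdpState')\, W(\mdpState')$, where $\rho_{\controller}(\mdpState, \mdpState')$ is the probability that $\policy_{\controller}$ started at $\mdpState$ first reaches exit condition $\mdpState'$ within horizon $\timeHorizon_{\controller}$. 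This uses that $\abstractPolicy(\mdpState, \controller) = \hlmPolicy([\mdpState]_{\eqRelation}, \controller)$, that a failed execution halts the composition and contributes $0$, and the convention that $\controllerFinalStateSet_{targ}$ is reached only upon successful completion of a target subsystem.

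Because both objects arise as limits of value iteration, I would introduce the finite-horizon versions $V^{(n)}_{\abstractMDP}$ and $W^{(n)}$ --- the probabilities of reaching $\abstractSuccessState$ (resp.\ $\controllerFinalStateSet_{targ}$) within $n$ macro-steps --- and prove by induction on $n$ that $W^{(n)}(\mdpState) \geq V^{(n)}_{\abstractMDP}([\mdpState]_{\eqRelation})$ for every environment state $\mdpState$. The base case $n=0$ is immediate: target states map to $\abstractSuccessState$ with both sides equal to $1$; every other state has $V^{(0)}_{\abstractMDP} = 0 \leq W^{(0)}$. For the inductive step at an entry condition $\mdpState \in \abstractState$, the crucial observation --- supplied by composability and the construction of $\eqRelation$ --- is that every exit condition $\mdpState' \in \controllerFinalStateSet_{\controller}$ lies in the single equivalence class $succ(\controller)$. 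Hence the inductive hypothesis yields the uniform bound $W^{(n)}(\mdpState') \geq V^{(n)}_{\abstractMDP}(succ(\controller))$ across all such $\mdpState'$; factoring this constant out of the inner sum gives $\sum_{\mdpState'} \rho_{\controller}(\mdpState, \mdpState')\, W^{(n)}(\mdpState') \geq \successProb_{\policy_{\controller}}^{\controller}(\mdpState)\, V^{(n)}_{\abstractMDP}(succ(\controller))$, since $\sum_{\mdpState'} \rho_{\controller}(\mdpState,\mdpState') = \successProb_{\policy_{\controller}}^{\controller}(\mdpState)$. The hypothesis $\successProb_{\policy_{\controller}}^{\controller}(\mdpState) \geq \bernoulliProb_{\controller}$ together with $V^{(n)}_{\abstractMDP}(succ(\controller)) \geq 0$ then replaces the true success probability by the parameter $\bernoulliProb_{\controller}$, and summing over $\controller$ against $\hlmPolicy(\abstractState, \cdot)$ gives $W^{(n+1)}(\mdpState) \geq V^{(n+1)}_{\abstractMDP}(\abstractState)$; states mapping to $\abstractFailureState$ satisfy $W^{(n+1)} \geq 0 = V^{(n+1)}_{\abstractMDP}$ trivially. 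Letting $n \to \infty$, monotone convergence of both value-iteration sequences to their respective reachability probabilities yields $W(\mdpInitialState) \geq V_{\abstractMDP}(\abstractInitialState)$, which is the claim.

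The main obstacle is the mismatch between the HLM and the true dynamics: the HLM collapses the whole exit set $\controllerFinalStateSet_{\controller}$ into one abstract state carrying a single parameter $\bernoulliProb_{\controller}$, whereas in the environment distinct exit conditions $\mdpState' \in \controllerFinalStateSet_{\controller}$ may have different downstream success probabilities $W(\mdpState')$, and the subtask success probability $\successProb_{\policy_{\controller}}^{\controller}(\mdpState)$ itself varies with the entry condition $\mdpState$. The argument must therefore avoid any pointwise comparison and instead exploit only (i) that composability forces all exit conditions into one equivalence class, so a single lower bound applies uniformly across the inner sum, and (ii) that only the aggregate $\successProb_{\policy_{\controller}}^{\controller}(\mdpState) = \sum_{\mdpState'} \rho_{\controller}(\mdpState,\mdpState')$ enters, which the hypothesis bounds below by $\bernoulliProb_{\controller}$. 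A secondary point requiring care is that the HLM generally contains cycles, so a direct structural induction over states would not be well-founded; routing the comparison through finite-horizon value iteration and then passing to the limit is precisely what makes the induction go through.
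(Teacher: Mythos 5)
Your proof is correct, but it takes a genuinely different route from the paper's. The paper argues at the level of trajectories: it writes \(\mathbb{P}^{\abstractInitialState}_{\abstractMDP, \hlmPolicy}(\Diamond \abstractSuccessState)\) as a sum of cylinder-set measures over goal-reaching finite HLM path fragments, proves a lemma --- by induction on the prefix length of a single fragment --- that the set of ``compatible environment path fragments'' associated with each HLM fragment has environment measure at least that fragment's HLM measure, and concludes because those compatible sets are pairwise disjoint subsets of the goal-reaching environment trajectories. You argue instead at the level of value functions: both reachability probabilities are realized as limits of finite-horizon (macro-step) value iteration, and a single induction on the horizon gives the uniform bound \(W^{(n)}(\mdpState) \geq V^{(n)}_{\abstractMDP}([\mdpState]_{\eqRelation})\) at every state, after which you pass to the limit. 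Both arguments turn on exactly the same two structural facts: composability together with the definition of \(\eqRelation\) forces all of \(\controllerFinalStateSet_{\controller}\) into the single class \(succ(\controller)\), so one inductive bound applies uniformly across the inner sum, and only the aggregate quantity \(\successProb_{\policy_{\controller}}^{\controller}(\mdpState) \geq \bernoulliProb_{\controller}\) ever enters. Your route is more compact: it stays inside standard dynamic-programming machinery, needs no explicit construction of the path \(\sigma\)-algebra or disjointness bookkeeping, handles cycles cleanly (as you note), and proves a slightly stronger statement --- the inequality at every state rather than only at \(\mdpInitialState\). The paper's route makes the abstraction's semantics more explicit, exhibiting precisely which environment behaviors are credited to each HLM transition. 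One point you should state rather than assume: your recursion for \(W^{(n)}\) requires that the composed process be Markov at subsystem boundaries, so that the continuation from an exit state \(\mdpState'\) depends only on \(\mdpState'\); this holds because the meta-policy and the subsystem policies are stationary and each subsystem's horizon clock resets upon execution, and the paper's proof needs (and implicitly uses) the same fact when it factors the measure of compatible fragments through ``some distribution \(\alpha(\mdpState)\)'' over entry states.
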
%
For example, consider the labyrinth task from Figure \ref{fig:labyrinth_gridworld}, and its corresponding HLM from Figure \ref{fig:labyrinth_HLM}. Suppose the HLM's parameters \(\bernoulliProb_{\controller}\) are specified such that they lower bound the true probabilities of subtask success, i.e. the transition probabilities in Figure \ref{fig:labyrinth_HLM} lower bound the probabilities of the subsystems successfully navigating their respective rooms in Figure \ref{fig:labyrinth_gridworld}. 
By planning a policy \(\hlmPolicy\) in the HLM that, for example, reaches \(\abstractSuccessState\) with probability \(0.95\), we ensure that the corresponding composition of the  subsystems will reach \(\controllerFinalStateSet_{targ}\) in the labyrinth with a probability of \textit{at least} \(0.95\).

\subsubsection{Automatic Decomposition of Task Specifications.}
Recall that our objective is not only to compute a meta-policy \(\abstractPolicy\), but also to \textit{learn} the subsystem policies \(\policy_{\controller_1}, \policy_{\controller_2},..., \policy_{\controller_{\numControllers}}\) that this meta-policy will execute, such that the system's task specification \(\mathbb{P}^{\mdpInitialState}_{\mdp, \abstractPolicy}(\Diamond \controllerFinalStateSet_{targ}) \geq 1 - \hlmFailProb\) is satisfied.
Suppose that we choose a set of HLM parameters \(\{\bernoulliProb_{\controller_1}, \bernoulliProb_{\controller_2}, ..., \bernoulliProb_{\controller_{\controller_{\numControllers}}}\}\) such that a policy \(\hlmPolicy\) in the HLM exists with \(\mathbb{P}^{\mdpInitialState}_{\mdp, \abstractPolicy}(\Diamond \targetStateSet) \geq 1 - \hlmFailProb\).
Then, so long as each of the corresponding subsystems \(\controller\) are able to learn a policy \(\policy_{\controller}\) such that \(\successProb_{\policy_{\controller}}^{\controller}(\mdpState) \geq \bernoulliProb_{\controller}\) for every \(\mdpState \in \controllerInitialStateSet_{\controller}\), Theorem \ref{thm:hlm_bounds_true_performance} tells us that the meta-policy defined by \(\abstractPolicy(\mdpState, \controller) \defeq \hlmPolicy([\mdpState]_{\eqRelation}, \controller)\) will satisfy the task specification.

We may thus interpret the values of parameters \(\bernoulliProb_{\controller}\) as \textit{subtask specifications}.
Each subsystem must achieve one of its exit conditions \(\mdpState' \in \controllerFinalStateSet_{\controller}\) within its allowed time horizon \(\timeHorizon_{\controller}\) with a probability of at least \(\bernoulliProb_{\controller}\), given its execution began from some entry condition \(\mdpState \in \controllerInitialStateSet_{\controller}\).
With this interpretation in mind, we take the following approach to the decomposition of the task specification: find the smallest values of parameters \(\bernoulliProb_{\controller_1}, \bernoulliProb_{\controller_2}, ..., \bernoulliProb_{\controller_{\numControllers}}\) such that an HLM policy \(\hlmPolicy\) exists satisfying \(\mathbb{P}^{\abstractInitialState}_{\abstractMDP, \hlmPolicy}(\Diamond \abstractSuccessState) \geq 1 - \hlmFailProb\).
We formulate this constrained parameter optimization problem as the bilinear program given in equations (\ref{eq:hlm_opt_objective})-(\ref{eq:hlm_opt_task_sat_constraints}).
In (\ref{eq:hlm_opt_dynamics_constraints}) and (\ref{eq:hlm_opt_task_sat_constraints}), we define \(pred(\abstractState) \defeq \{(\abstractState', \controller') | \controller' \in \controllerSet(\abstractState') \; and \; \abstractState = succ(\controller')\}\).%
\begin{align}
    \min_{\occupancyVar, \bernoulliProb_{\controller}} \quad & \sum_{\controller \in \controllerSet} \bernoulliProb_{\controller}\label{eq:hlm_opt_objective} \\
    \textrm{s.t.} \quad & \sum_{\controller \in \controllerSet(\abstractState)} \occupancyVar(\abstractState, \controller) = \delta_{\abstractInitialState}(\abstractState) + \sum_{(\abstractState', \controller') \in pred(\abstractState)} \occupancyVar(\abstractState', \controller') \bernoulliProb_{\controller'}, \label{eq:hlm_opt_dynamics_constraints} \\
    & \qquad \qquad \qquad \qquad \qquad \qquad \forall \abstractState \in  \abstractStateSet \setminus \{\abstractFailureState, \abstractSuccessState\} \notag\\
    & \occupancyVar(\abstractState, \controller) \geq 0, \; \forall \abstractState \in \abstractStateSet \setminus \{\abstractFailureState, \abstractSuccessState\},\; \forall \controller \in \controllerSet(\abstractState) \\
    & 0 \leq \bernoulliProb_{\controller} \leq 1, \; \forall \controller \in \controllerSet \label{eq:hlm_opt_pc_0_1_constraints} \\
    & \sum_{(\abstractState', \controller') \in pred(\abstractSuccessState)} \occupancyVar(\abstractState', \controller') \bernoulliProb_{\controller'} \geq 1 - \hlmFailProb
    \label{eq:hlm_opt_task_sat_constraints}
\end{align}%
The decision variables in (\ref{eq:hlm_opt_objective})-(\ref{eq:hlm_opt_task_sat_constraints}) are the HLM parameters \(\bernoulliProb_{\controller}\) for every \(\controller \in \controllerSet\), and \(\occupancyVar(\abstractState, \controller)\) for every \(\abstractState \in \abstractStateSet \setminus \{\abstractFailureState, \abstractSuccessState\}\).
The value of \(\delta_{\abstractInitialState}(\abstractState)\) is $1$ if \(\abstractState=\abstractInitialState\) and $0$ otherwise.
The constraint~\eqref{eq:hlm_opt_dynamics_constraints} is the so-called Bellman-flow constraint; it ensures that the variable \(\occupancyVar(\abstractState, \controller)\) defines the expected number of times subsystem \(\controller\) is executed in state \(\abstractState\).
The constraint \eqref{eq:hlm_opt_task_sat_constraints} enforces the HLM policy \(\hlmPolicy\)'s satisfaction of \(\mathbb{P}^{\abstractInitialState}_{\abstractMDP, \hlmPolicy}(\Diamond \abstractSuccessState) \geq 1 - \hlmFailProb\).
We refer to~\citet{etessami2007multi} and~\citet{puterman2014markov} for further details on these variables and the constraints.
\section{Iterative Compositional Reinforcement Learning (ICRL)}
\label{sec:icrl}

In this section, we discuss how subsystem policies are learned to satisfy the subtask specifications discussed above, and we present how the bilinear program given in (\ref{eq:hlm_opt_objective})-(\ref{eq:hlm_opt_task_sat_constraints}) is modified to refine the subtask specifications, after some training of the subsystems has been completed.

\subsubsection{Learning and Verifying Subsystem Policies.}
Let \(\bernoulliProb_{\controller_1}, \bernoulliProb_{\controller_2}, ..., \bernoulliProb_{\controller_{\numControllers}}\) be the parameter values output as a solution to problem (\ref{eq:hlm_opt_objective})-(\ref{eq:hlm_opt_task_sat_constraints}). We want each subsystem \(\controller\) to learn a policy \(\policy_{\controller}\) satisfying the subtask specification:  \(\successProb_{\policy_{\controller}}^{\controller}(\mdpState) \geq \bernoulliProb_{\controller}\) for each entry condition \(\mdpState \in \controllerInitialStateSet_{\controller}\) of the subsystem. 
We note that any RL algorithm and reward function may be used, so long as the resulting learned policy can be verified to satisfy its subtask specification.
A particularly simple candidate reward function \(\mdpRewardFunction_{\controller}\) outputs \(1\) when an exit condition \(\mdpState \in \controllerFinalStateSet_{\controller}\) is first reached, and outputs \(0\) otherwise. 
Under this reward function, we have  \(\successProb_{\policy_{\controller}}^{\controller}(\mdpState) = \mathbb{E}[\sum_{t\in[\timeHorizon_{\controller}]}\mdpRewardFunction_{\controller}(\mdpState_t) | \policy_{\controller}, \mdpState_0 = \mdpState]\). We can maximize the probability of reaching an exit condition by maximizing the expected undiscounted sum of rewards.

To verify that a learned subsystem policy \(\policy_{\controller}\) satisfies its subtask specification, we consider \(\controllerInfProb_{\controller} = \inf \{ \successProb_{\policy_{\controller}}^{\controller}(\mdpState)|\mdpState \in \controllerInitialStateSet_{\controller}\}\), the greatest lower bound of the policy's probability of subtask succcess, beginning from any of the subsystem's entry conditions. So long as \(\controllerInfProb_{\controller} \geq \bernoulliProb_{\controller}\), the subtask specification is satisfied.
In practice, the value of \(\controllerInfProb_{\controller}\) cannot be known exactly, but we may obtain an estimate \(\controllerPerformanceEstimate_{\controller}\) of its value through empirical rollouts of \(\policy_{\controller}\), beginning from the different entry conditions \(\mdpState \in \controllerInitialStateSet_{\controller}\).
We note that one may additionally use Hoeffding's inequality to obtain a high-confidence range of values for \(\controllerInfProb_{\controller}\), given the number of rollouts used.
We refer to \(\controllerPerformanceEstimate_{\controller}\) as the \textit{estimated performance value} of policy \(\policy_{\controller}\).

\subsubsection{Automatic Refinement of the Subtask Specifications.}
The estimated performance values \(\controllerPerformanceEstimate_{\controller}\) are useful not only for the empirical verification of the learned policies, but also as additional information used periodically during training to refine the subtask specifications. To do so, we re-solve the optimization problem (\ref{eq:hlm_opt_objective})-(\ref{eq:hlm_opt_task_sat_constraints}), with a modified objective (\ref{eq:hlm_opt_modified_obj}), and additional constraints \eqref{eq:hlm_opt_lb_constraints}-\eqref{eq:hlm_opt_ub_constraints}.%
\begin{align}
    & obj(\lbList) = \sum_{\controller \in \controllerSet} (\bernoulliProb_{\controller} - \controllerPerformanceEstimate_{\controller})
    \label{eq:hlm_opt_modified_obj}\\
    & LBConst(\lbList) = \{\bernoulliProb_{\controller} \geq \controllerPerformanceEstimate_{\controller} | \forall \controllerPerformanceEstimate_{\controller} \in \lbList\}
    \label{eq:hlm_opt_lb_constraints}\\
    & UBConst(\ubList) = \{\bernoulliProb_{\controller} \leq \controllerPerformanceEstimate_{\controller} | \forall \controllerPerformanceEstimate_{\controller} \in \ubList\}
    \label{eq:hlm_opt_ub_constraints}
\end{align}
Here, we assume that the subsystems have learned policies \(\policy_{\controller_1}, \policy_{\controller_2}, ..., \policy_{\controller_{\numControllers}}\). Let \(\lbList = \{\controllerPerformanceEstimate_{\controller_1}, \controllerPerformanceEstimate_{\controller_2}, ..., \controllerPerformanceEstimate_{\controller_{\numControllers}}\}\) be the set of the corresponding estimated performance values.
The objective function (\ref{eq:hlm_opt_modified_obj}) minimizes the performance gap between the subtask specifications \(\bernoulliProb_{\controller}\) and the current estimated performance values \(\controllerPerformanceEstimate_{\controller}\).
The rationale behind the additional constraints defined by \(LBConst(\lbList)\) is as follows: the subsystems have already learned policies achieving probabilities of subtask success greater than the estimated performance values \(\controllerPerformanceEstimate_{\controller}\), and so there is no reason to consider subtask specifications \(\bernoulliProb_{\controller}\) that are below these values.

\begin{algorithm}[t]
    \DontPrintSemicolon 
    \KwIn{ Partially instantiated subsystems \(\controllerSet = \{\controller_1, \controller_2, ..., \controller_{\numControllers}\}\), \(\hlmFailProb\), \(\trainingSteps\), \(\maxTrainingSteps\).}
    \KwOut{Subsystem policies \(\{\policy_{\controller_1}, \policy_{\controller_2}, ..., \policy_{\controller_{\numControllers}}\}\), meta-policy \(\abstractPolicy\), success probability \(\controllerPerformanceEstimate_{\abstractPolicy}\).}
    
    \(\abstractMDP \gets ConstructHLM(\controllerSet)\)\;
    
    

    \(\controllerPerformanceEstimate_{\controller_1}, \controllerPerformanceEstimate_{\controller_2}, ..., \controllerPerformanceEstimate_{\controller_{\numControllers}}, \controllerPerformanceEstimate_{\abstractPolicy} \gets 0\); \(\numSteps_{\controller_1}, \numSteps_{\controller_2}, ..., \numSteps_{\controller_{\numControllers}} \gets 0\)\;
    
    \(\lbList \gets \{\controllerPerformanceEstimate_{\controller_1}, \controllerPerformanceEstimate_{\controller_2}, ..., \controllerPerformanceEstimate_{\controller_{\numControllers}}\}\); \(\ubList \gets \{\}\)\;
    
    \While{\(\controllerPerformanceEstimate_{\abstractPolicy} \leq 1 - \delta\)}{
    
        \If{~\eqref{eq:hlm_opt_objective}-\eqref{eq:hlm_opt_ub_constraints} {\normalfont infeasible}}{
            \Return{{\normalfont Problem is infeasible.}}
        }
        
        \(\{\bernoulliProb_{\controller_1}, \ldots, \bernoulliProb_{\controller_{\numControllers}}\} \gets\)
        \text{Solve~\eqref{eq:hlm_opt_objective}-\eqref{eq:hlm_opt_ub_constraints} using} \((\abstractMDP, \lbList, \ubList)\)\;
        
        \(\controller_{j} \gets selectSubSystem(\bernoulliProb_{\controller_1}, \ldots, \bernoulliProb_{\controller_{\numControllers}}, \controllerPerformanceEstimate_{\controller_1},\ldots, \controllerPerformanceEstimate_{\controller_{\numControllers}})\)\;
    
        \(\policy_{\controller_{j}} \gets RLTrain(\controller_j, \policy_{\controller_j}, \trainingSteps)\); \(\numSteps_{\controller_j} \gets \numSteps_{\controller_j} + \trainingSteps\)\;
        
        
        \(\controllerPerformanceEstimate_{\controller_j} \gets estimateSubTaskSuccessProb(\controller_j, \policy_{\controller_j})\)\;
        
        \(\lbList.update(\controllerPerformanceEstimate_{\controller_j})\)\;
        
        \If{\(\numSteps_{\controller_j} \geq \maxTrainingSteps\)}{\(\ubList.add(\controllerPerformanceEstimate_{\controller_j})\)\;}
        
        \(\abstractPolicy \gets solveOptimalHLMPolicy(\abstractMDP, \lbList)\)\;
        
        \(\controllerPerformanceEstimate_{\abstractPolicy} \gets predictTaskSuccessProbability(\abstractMDP, \abstractPolicy, \lbList)\)\;
    }
    
    \Return{\(\{\policy_{\controller_1}, \policy_{\controller_2}, ..., \policy_{\controller_{\numControllers}}\}\), \(\abstractPolicy\), \(\controllerPerformanceEstimate_{\abstractPolicy}\)}\;
    
    \caption{Iterative Compositional RL (ICRL)}
    \label{alg:ICRL}
\end{algorithm}

Conversely, if the RL algorithm of a particular subsystem \(\controller\) has \textit{converged} -- i.e. the value of \(\controllerPerformanceEstimate_{\controller}\) will no longer increase with additional training steps -- we add the constraint \(\bernoulliProb_{\controller} \leq \controllerPerformanceEstimate_{\controller}\). 
This ensures that solutions to the optimization problem will \textit{not} yield a subtask specification \(\bernoulliProb_{\controller}\) that is larger than what the subsystem can realistically achieve. 
In practice, as a proxy to convergence, we allow each subsystem a maximum budget of \(\maxTrainingSteps\) training steps. 
Once any subsystem \(\controller\) has exceeded this training budget, we append \(\controllerPerformanceEstimate_{\controller}\) to the set \(\ubList\), which is used to define \(UBConst(\ubList)\) in (\ref{eq:hlm_opt_ub_constraints}).

\subsubsection{Iterative Compositional Reinforcement Learning (ICRL).}
By alternating between the training of the subsystems and the refinement of the subtask specifications, we obtain Algorithm \ref{alg:ICRL}.
In lines \(1-3\), the HLM is constructed from the collection of partially instantiated subsystems \(\controllerSet\) and the subsystem policies are initialized.
The while loop in lines \(4-12\) is the main loop controlling the subtask specifications and training of the subsystems.
In line \(5\), the bilinear program \eqref{eq:hlm_opt_objective}-\eqref{eq:hlm_opt_ub_constraints} is solved to update the values of \(\bernoulliProb_{\controller}\). 
These values are used, along with the estimated performance values, to select a subsystem to train.
A simple selection scheme, is to choose the subsystem \(\controller_j\) maximizing the current performance gap between \(\bernoulliProb_{\controller_j}\) and \(\controllerPerformanceEstimate_{\controller_j}\).
In line \(7\), the subsystem is trained for \(\trainingSteps\) steps using the RL algorithm of choice.
The subsystem's initial state is sampled uniformly from its entry conditions during training.
Finally, in line \(12\), the HLM \(\abstractMDP\) and the current estimated performance values \(\lbList\) are used to plan a meta-policy \(\abstractPolicy\) maximizing the probability \(\controllerPerformanceEstimate_{\abstractPolicy}\) of reaching the HLM goal state \(\abstractSuccessState\).
This step uses standard MDP algorithms \cite{puterman2014markov}.

We note that the conditions in lines \(4\) and \(5\) ensure that the algorithm only terminates once a meta-policy that satisfies the task specification exists, or the optimization problem~\eqref{eq:hlm_opt_objective}-\eqref{eq:hlm_opt_ub_constraints} has become infeasible.
One of these two outcomes is guarateed to eventually occur.
In particular, by our construction of \(\ubList\) and the corresponding constraints in \eqref{eq:hlm_opt_ub_constraints}, the problem will become infeasible if all of the allotted subsystem training budgets \(\numSteps_{max}\) have been exhausted and a satisfactory meta-policy still does not exist.
In such circumstances the task designer may wish to lower \(\delta\), to increase \(N_{max}\), or to further decompose the task using additional subtasks.

\section{Numerical Examples}

\label{sec:experiments}

In this section, we present the results of applying the proposed framework to the labyrinth navigation task used as a running example throughout the paper. 
We begin by discussing the results obtained using a discrete gridworld implementation of the labyrinth.
However, to help demonstrate the framework's generality, we also present results for a continuous-state and continuous-action labyrinth, whose dynamics are goverened by a rigid-body physics simulator.
Project code is available at: \href{https://github.com/cyrusneary/verifiable-compositional-rl}{github.com/cyrusneary/verifiable-compositional-rl}.

Figure~\ref{fig:labyrinth_gridworld} illustrates the labyrinth environment, and highlights each subtask with a different color, matching the colors used to represent the different subtasks in the presentation of the numerical results.
Recall that the overall task specification is to safely navigate from the labyrinth's initial state in the top left corner to the goal state marked by a green square in the bottom left corner, with a probability of at least \(0.95\). 

\subsubsection{Discrete Gridworld Labyrinth Environment.}
We implement the gridworld labyrinth environment using MiniGrid \cite{gym_minigrid}.
The environment's state space consists of the current position and orientation within the labyrinth, resulting in 1600 total states.
The allowed actions are: \textit{turn left}, \textit{turn right}, and \textit{move forward}.
A slip probability is added to the environment dynamics to render them stochastic; each action has a \(10\%\) probability of accidentally causing the result of a different action to occur.
Subtask entry \(\controllerInitialStateSet_{\controller}\) and exit \(\controllerFinalStateSet_{\controller}\) conditions are implemented as finite collections of states.

\subsubsection{ICRL Algorithm Implementation.}
Each RL subsystem is trained using the Stable-Baselines3 \cite{stable-baselines3} implementation of the proximal policy optimization (PPO) algorithm \cite{schulman2017proximal}.
Whenever estimates of task or subtask success probabilities are needed, we roll out the corresponding (sub)system 300 times from initial states randomly sampled from \(\controllerInitialStateSet_{\controller}\), and compute the empirical success rate.
We solve the bilinear program in~\eqref{eq:hlm_opt_objective}--\eqref{eq:hlm_opt_task_sat_constraints} using \textit{Gurobi}~\cite{gurobi}.
Gurobi transforms the bilinear program into an equivalent mixed-integer linear program, and computes a globally optimal solution to this program by using cutting plane and branch and bound methods.
For further details please see the supplementary material.

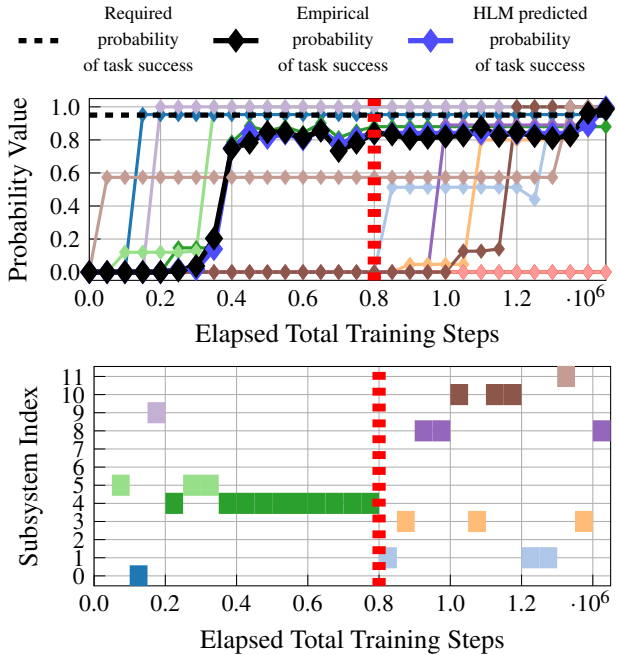
\begin{figure}[t]%
    \begin{subfigure}[t]{\columnwidth}
        \centering
        \begin{tikzpicture}
\definecolor{color0}{rgb}{0.12156862745098,0.466666666666667,0.705882352941177}
\definecolor{color1}{rgb}{0.682352941176471,0.780392156862745,0.909803921568627}
\definecolor{color2}{rgb}{1,0.498039215686275,0.0549019607843137}
\definecolor{color3}{rgb}{1,0.733333333333333,0.470588235294118}
\definecolor{color4}{rgb}{0.172549019607843,0.627450980392157,0.172549019607843}
\definecolor{color5}{rgb}{0.596078431372549,0.874509803921569,0.541176470588235}
\definecolor{color6}{rgb}{0.83921568627451,0.152941176470588,0.156862745098039}
\definecolor{color7}{rgb}{1,0.596078431372549,0.588235294117647}
\definecolor{color8}{rgb}{0.580392156862745,0.403921568627451,0.741176470588235}
\definecolor{color9}{rgb}{0.772549019607843,0.690196078431373,0.835294117647059}
\definecolor{color10}{rgb}{0.549019607843137,0.337254901960784,0.294117647058824}
\definecolor{color11}{rgb}{0.768627450980392,0.611764705882353,0.580392156862745}

\begin{customlegend}[legend columns=3, legend style={align=center, column sep=1ex, font=\scriptsize, draw=none}, legend entries={Required\\ probability\\of task success, Empirical\\ probability\\of task success, HLM predicted\\ probability\\of task success}]
\addlegendimage{line width=2pt, black, dashed}
\addlegendimage{line width=2pt, black, mark=diamond*, mark size=3}
\addlegendimage{line width=2pt, blue!70!white, mark=diamond*, mark size=3}
\end{customlegend}
\end{tikzpicture}
    \end{subfigure}\hfill%
    \centering
    \begin{subfigure}[t]{0.48\textwidth}
        \centering
\begin{tikzpicture}

\definecolor{color0}{rgb}{0.12156862745098,0.466666666666667,0.705882352941177}
\definecolor{color1}{rgb}{0.682352941176471,0.780392156862745,0.909803921568627}
\definecolor{color2}{rgb}{1,0.498039215686275,0.0549019607843137}
\definecolor{color3}{rgb}{1,0.733333333333333,0.470588235294118}
\definecolor{color4}{rgb}{0.172549019607843,0.627450980392157,0.172549019607843}
\definecolor{color5}{rgb}{0.596078431372549,0.874509803921569,0.541176470588235}
\definecolor{color6}{rgb}{0.83921568627451,0.152941176470588,0.156862745098039}
\definecolor{color7}{rgb}{1,0.596078431372549,0.588235294117647}
\definecolor{color8}{rgb}{0.580392156862745,0.403921568627451,0.741176470588235}
\definecolor{color9}{rgb}{0.772549019607843,0.690196078431373,0.835294117647059}
\definecolor{color10}{rgb}{0.549019607843137,0.337254901960784,0.294117647058824}
\definecolor{color11}{rgb}{0.768627450980392,0.611764705882353,0.580392156862745}

\begin{axis}[
tick align=inside,
tick pos=left,
x grid style={white!69.0196078431373!black},
  ticklabel style = {font=\footnotesize},
xmajorgrids,
xmin=0, xmax=1450000,
xtick style={color=black},
y grid style={white!69.0196078431373!black},
ymajorgrids,
ymin=-0.05, ymax=1.05,
height=4.0cm,
width=0.99\textwidth,
ytick={0.0, 0.2, 0.4, 0.6, 0.8, 1.0},
yticklabels={0.0, 0.2, 0.4, 0.6, 0.8, 1.0},
xtick={0, 200000, 400000, 600000, 800000, 1000000, 1200000, 1400000},
xticklabels={0.0, 0.2, 0.4, 0.6, 0.8, 1.0, 1.2, },
ytick style={color=black},
legend to name=named,
every x tick scale label/.style={
    at={(1,0)},xshift=-15.5pt,yshift=-10.0pt,anchor=south west,inner sep=0pt},
ylabel={Probability Value},
xlabel={Elapsed Total Training Steps}
]
\addplot [very thick, color0, mark=diamond*, mark size=2]
table {%
0 0
50000 0
100000 0
150000 0.953333333333333
200000 0.953333333333333
250000 0.953333333333333
300000 0.953333333333333
350000 0.953333333333333
400000 0.953333333333333
450000 0.953333333333333
500000 0.953333333333333
550000 0.953333333333333
600000 0.953333333333333
650000 0.953333333333333
700000 0.953333333333333
750000 0.953333333333333
800000 0.953333333333333
850000 0.953333333333333
900000 0.953333333333333
950000 0.953333333333333
1000000 0.953333333333333
1050000 0.953333333333333
1100000 0.953333333333333
1150000 0.953333333333333
1200000 0.953333333333333
1250000 0.953333333333333
1300000 0.953333333333333
1350000 0.953333333333333
1400000 0.953333333333333
1450000 0.953333333333333
};
\addlegendentry{1}
\addplot [very thick, color1, mark=diamond*, mark size=2]
table {%
0 0
50000 0
100000 0
150000 0
200000 0
250000 0
300000 0
350000 0
400000 0
450000 0
500000 0
550000 0
600000 0
650000 0
700000 0
750000 0
800000 0
850000 0.513333333333333
900000 0.513333333333333
950000 0.513333333333333
1000000 0.513333333333333
1050000 0.513333333333333
1100000 0.513333333333333
1150000 0.513333333333333
1200000 0.513333333333333
1250000 0.44
1300000 1
1350000 1
1400000 1
1450000 1
};
\addplot [very thick, color2, mark=diamond*, mark size=2]
table {%
0 0
50000 0
100000 0
150000 0
200000 0
250000 0
300000 0
350000 0
400000 0
450000 0
500000 0
550000 0
600000 0
650000 0
700000 0
750000 0
800000 0
850000 0
900000 0
950000 0
1000000 0
1050000 0
1100000 0
1150000 0
1200000 0
1250000 0
1300000 0
1350000 0
1400000 0
1450000 0
};
\addplot [very thick, color3, mark=diamond*, mark size=2]
table {%
0 0
50000 0
100000 0
150000 0
200000 0
250000 0
300000 0
350000 0
400000 0
450000 0
500000 0
550000 0
600000 0
650000 0
700000 0
750000 0
800000 0
850000 0
900000 0.0466666666666667
950000 0.0466666666666667
1000000 0.0466666666666667
1050000 0.0466666666666667
1100000 0.8
1150000 0.8
1200000 0.8
1250000 0.8
1300000 0.8
1350000 0.8
1400000 0.996666666666667
1450000 0.996666666666667
};
\addplot [very thick, color4, mark=diamond*, mark size=2]
table {%
0 0
50000 0
100000 0
150000 0
200000 0
250000 0.146666666666667
300000 0.146666666666667
350000 0.146666666666667
400000 0.786666666666667
450000 0.883333333333333
500000 0.853333333333333
550000 0.88
600000 0.843333333333333
650000 0.9
700000 0.813333333333333
750000 0.853333333333333
800000 0.88
850000 0.88
900000 0.88
950000 0.88
1000000 0.88
1050000 0.88
1100000 0.88
1150000 0.88
1200000 0.88
1250000 0.88
1300000 0.88
1350000 0.88
1400000 0.88
1450000 0.88
};
\addplot [very thick, color5, mark=diamond*, mark size=2]
table {%
0 0
50000 0
100000 0.12
150000 0.12
200000 0.12
250000 0.12
300000 0.126666666666667
350000 1
400000 1
450000 1
500000 1
550000 1
600000 1
650000 1
700000 1
750000 1
800000 1
850000 1
900000 1
950000 1
1000000 1
1050000 1
1100000 1
1150000 1
1200000 1
1250000 1
1300000 1
1350000 1
1400000 1
1450000 1
};
\addplot [very thick, color6, mark=diamond*, mark size=2]
table {%
0 0
50000 0
100000 0
150000 0
200000 0
250000 0
300000 0
350000 0
400000 0
450000 0
500000 0
550000 0
600000 0
650000 0
700000 0
750000 0
800000 0
850000 0
900000 0
950000 0
1000000 0
1050000 0
1100000 0
1150000 0
1200000 0
1250000 0
1300000 0
1350000 0
1400000 0
1450000 0
};
\addplot [very thick, color7, mark=diamond*, mark size=2]
table {%
0 0
50000 0
100000 0
150000 0
200000 0
250000 0
300000 0
350000 0
400000 0
450000 0
500000 0
550000 0
600000 0
650000 0
700000 0
750000 0
800000 0
850000 0
900000 0
950000 0
1000000 0
1050000 0
1100000 0
1150000 0
1200000 0
1250000 0
1300000 0
1350000 0
1400000 0
1450000 0
};
\addplot [very thick, color8, mark=diamond*, mark size=2]
table {%
0 0
50000 0
100000 0
150000 0
200000 0
250000 0
300000 0
350000 0
400000 0
450000 0
500000 0
550000 0
600000 0
650000 0
700000 0
750000 0
800000 0
850000 0
900000 0
950000 0
1000000 0.89
1050000 0.89
1100000 0.89
1150000 0.89
1200000 0.89
1250000 0.89
1300000 0.89
1350000 0.89
1400000 0.89
1450000 1
};
\addplot [very thick, color9, mark=diamond*, mark size=2]
table {%
0 0
50000 0
100000 0
150000 0
200000 1
250000 1
300000 1
350000 1
400000 1
450000 1
500000 1
550000 1
600000 1
650000 1
700000 1
750000 1
800000 1
850000 1
900000 1
950000 1
1000000 1
1050000 1
1100000 1
1150000 1
1200000 1
1250000 1
1300000 1
1350000 1
1400000 1
1450000 1
};
\addplot [very thick, color10, mark=diamond*, mark size=2]
table {%
0 0
50000 0
100000 0
150000 0
200000 0
250000 0
300000 0
350000 0
400000 0
450000 0
500000 0
550000 0
600000 0
650000 0
700000 0
750000 0
800000 0
850000 0
900000 0
950000 0
1000000 0
1050000 0.126666666666667
1100000 0.126666666666667
1150000 0.14
1200000 1
1250000 1
1300000 1
1350000 1
1400000 1
1450000 1
};
\addplot [very thick, color11, mark=diamond*, mark size=2]
table {%
0 0
50000 0.573333333333333
100000 0.573333333333333
150000 0.573333333333333
200000 0.573333333333333
250000 0.573333333333333
300000 0.573333333333333
350000 0.573333333333333
400000 0.573333333333333
450000 0.573333333333333
500000 0.573333333333333
550000 0.573333333333333
600000 0.573333333333333
650000 0.573333333333333
700000 0.573333333333333
750000 0.573333333333333
800000 0.573333333333333
850000 0.573333333333333
900000 0.573333333333333
950000 0.573333333333333
1000000 0.573333333333333
1050000 0.573333333333333
1100000 0.573333333333333
1150000 0.573333333333333
1200000 0.573333333333333
1250000 0.573333333333333
1300000 0.573333333333333
1350000 1
1400000 1
1450000 1
};
\addplot [line width=2pt, black, dashed ,on layer=foreground]
table {%
0 0.95
1450000 0.95
};
\addplot [line width=2pt, blue!70!white, mark=diamond*, mark size=3]
table {%
0 0
50000 0
100000 0
150000 0
200000 0
250000 0.0167786666666667
300000 0.0177108148148148
350000 0.139822222222222
400000 0.749955555555555
450000 0.842111111111111
500000 0.813511111111111
550000 0.838933333333333
600000 0.803977777777778
650000 0.858
700000 0.775377777777778
750000 0.813511111111111
800000 0.838933333333333
850000 0.838933333333333
900000 0.838933333333333
950000 0.838933333333333
1000000 0.838933333333333
1050000 0.838933333333333
1100000 0.838933333333333
1150000 0.838933333333333
1200000 0.838933333333333
1250000 0.838933333333333
1300000 0.838933333333333
1350000 0.838933333333333
1400000 0.887033333333333
1450000 0.996666666666667
};
\addplot [line width=2pt, black, mark=diamond*, mark size=3]
table {%
0 0
50000 0
100000 0
150000 0
200000 0
250000 0.01
300000 0.0366666666666667
350000 0.203333333333333
400000 0.746666666666667
450000 0.783333333333333
500000 0.846666666666667
550000 0.846666666666667
600000 0.813333333333333
650000 0.856666666666667
700000 0.733333333333333
750000 0.783333333333333
800000 0.84
850000 0.833333333333333
900000 0.806666666666667
950000 0.813333333333333
1000000 0.82
1050000 0.826666666666667
1100000 0.876666666666667
1150000 0.823333333333333
1200000 0.846666666666667
1250000 0.823333333333333
1300000 0.81
1350000 0.826666666666667
1400000 0.963333333333333
1450000 0.986666666666667
};
\addlegendentry{Empirically Measured Probability of Task Success}
\addplot [line width=5pt, red, dashed  ,on layer=foreground]
table {%
800000 -0.05
800000 1.05
};
\end{axis}
\end{tikzpicture}
        \label{fig:experiments:b}
    \end{subfigure}
    \hspace*{\fill}
    \centering
    \begin{subfigure}[t]{0.48\textwidth}
        \centering
\begin{tikzpicture}

\definecolor{color0}{rgb}{0.596078431372549,0.874509803921569,0.541176470588235}
\definecolor{color1}{rgb}{0.12156862745098,0.466666666666667,0.705882352941177}
\definecolor{color2}{rgb}{0.772549019607843,0.690196078431373,0.835294117647059}
\definecolor{color3}{rgb}{0.172549019607843,0.627450980392157,0.172549019607843}
\definecolor{color4}{rgb}{0.682352941176471,0.780392156862745,0.909803921568627}
\definecolor{color5}{rgb}{1,0.733333333333333,0.470588235294118}
\definecolor{color6}{rgb}{0.580392156862745,0.403921568627451,0.741176470588235}
\definecolor{color7}{rgb}{0.549019607843137,0.337254901960784,0.294117647058824}
\definecolor{color8}{rgb}{0.768627450980392,0.611764705882353,0.580392156862745}

\begin{axis}[
height=4.5cm,
width=0.99\textwidth,
tick align=inside,
tick pos=left,
x grid style={white!69.0196078431373!black},
ticklabel style = {font=\footnotesize},
xmajorgrids,
xmin=0, xmax=1450000,
xtick style={color=black},
y grid style={white!69.0196078431373!black},
ymajorgrids,
ymin=-0.55, ymax=11.55,
ytick style={color=black},
ytick={0,1,2,3,4,5,6,7,8,9,10,11},
yticklabels = {0,1,2,3,4,5,6,7,8,9,10,11},
xtick={0, 200000, 400000, 600000, 800000, 1000000, 1200000, 1400000},
xticklabels={0.0, 0.2, 0.4, 0.6, 0.8, 1.0, 1.2, },
every x tick scale label/.style={
    at={(1,0)},xshift=-15.5pt,yshift=-10.0pt,anchor=south west,inner sep=0pt},
ylabel={Subsystem Index},
xlabel={Elapsed Total Training Steps}
]
\addplot [line width=8pt, color0]
table {%
50000 5
100000 5
};
\addplot [line width=8pt, color1]
table {%
100000 0
150000 0
};
\addplot [line width=8pt, color2]
table {%
150000 9
200000 9
};
\addplot [line width=8pt, color3]
table {%
200000 4
250000 4
};
\addplot [line width=8pt, color0]
table {%
250000 5
300000 5
};
\addplot [line width=8pt, color0]
table {%
300000 5
350000 5
};
\addplot [line width=8pt, color3]
table {%
350000 4
400000 4
};
\addplot [line width=8pt, color3]
table {%
400000 4
450000 4
};
\addplot [line width=8pt, color3]
table {%
450000 4
500000 4
};
\addplot [line width=8pt, color3]
table {%
500000 4
550000 4
};
\addplot [line width=8pt, color3]
table {%
550000 4
600000 4
};
\addplot [line width=8pt, color3]
table {%
600000 4
650000 4
};
\addplot [line width=8pt, color3]
table {%
650000 4
700000 4
};
\addplot [line width=8pt, color3]
table {%
700000 4
750000 4
};
\addplot [line width=8pt, color3]
table {%
750000 4
800000 4
};
\addplot [line width=8pt, color4]
table {%
800000 1
850000 1
};
\addplot [line width=8pt, color5]
table {%
850000 3
900000 3
};
\addplot [line width=8pt, color6]
table {%
900000 8
950000 8
};
\addplot [line width=8pt, color6]
table {%
950000 8
1000000 8
};
\addplot [line width=8pt, color7]
table {%
1000000 10
1050000 10
};
\addplot [line width=8pt, color5]
table {%
1050000 3
1100000 3
};
\addplot [line width=8pt, color7]
table {%
1100000 10
1150000 10
};
\addplot [line width=8pt, color7]
table {%
1150000 10
1200000 10
};
\addplot [line width=8pt, color4]
table {%
1200000 1
1250000 1
};
\addplot [line width=8pt, color4]
table {%
1250000 1
1300000 1
};
\addplot [line width=8pt, color8]
table {%
1300000 11
1350000 11
};
\addplot [line width=8pt, color5]
table {%
1350000 3
1400000 3
};
\addplot [line width=8pt, color6]
table {%
1400000 8
1450000 8
};
\addplot [line width=5pt, red, dashed]
table {%
800000 -0.55
800000 11.55
};
\end{axis}

\end{tikzpicture}
    \label{fig:experiments:c}
    \end{subfigure}
    \caption{Discrete labyrinth experimental results.
    Top: Estimated task and subtask success probabilities during training.
    Bottom: Automatically generated subsystem training schedule.
    Each subtask is represented by a different color, matching those used in Figure \ref{fig:labyrinth_gridworld}.
    The dotted red lines illustrate the point in training at which the HLM automatically refines the subtask specifications.
    Step counts do not include the rollouts used to estimate subtask success probabilities.
    }
    \label{fig:experiments}
\end{figure}

\definecolor{color0}{rgb}{0.12156862745098,0.466666666666667,0.705882352941177}
\definecolor{color1}{rgb}{0.682352941176471,0.780392156862745,0.909803921568627}
\definecolor{color2}{rgb}{1,0.498039215686275,0.0549019607843137}
\definecolor{color3}{rgb}{1,0.733333333333333,0.470588235294118}
\definecolor{color4}{rgb}{0.172549019607843,0.627450980392157,0.172549019607843}
\definecolor{color5}{rgb}{0.596078431372549,0.874509803921569,0.541176470588235}
\definecolor{color6}{rgb}{0.83921568627451,0.152941176470588,0.156862745098039}
\definecolor{color7}{rgb}{1,0.596078431372549,0.588235294117647}
\definecolor{color8}{rgb}{0.580392156862745,0.403921568627451,0.741176470588235}
\definecolor{color9}{rgb}{0.772549019607843,0.690196078431373,0.835294117647059}
\definecolor{color10}{rgb}{0.549019607843137,0.337254901960784,0.294117647058824}
\definecolor{color11}{rgb}{0.768627450980392,0.611764705882353,0.580392156862745}

\definecolor{highlightgrey}{gray}{0.85}

\begin{table*}[t]
\centering
 \begin{tabular}{|c | c | c | c | c | c | c | c | c | c | c | c | c |} 
 \hline
 Subsystem Index & 
 \cellcolor{color0}0 & 
 \cellcolor{color1}1 & 
 \cellcolor{color2}2 & 
 \cellcolor{color3}3 & 
 \cellcolor{color4}4 & 
 \cellcolor{color5}5 & 
 \cellcolor{color6}6 & 
 \cellcolor{color7}7 & 
 \cellcolor{color8}8 & 
 \cellcolor{color9}9 & 
 \cellcolor{color10}10 & 
 \cellcolor{color11}11 \\
 \hline\hline
 \(\bernoulliProb_{\controller}\) at \(t = 6e5\) 
 & \cellcolor{highlightgrey}.97 
 & .00 
 & .00 
 & .00 
 & \cellcolor{highlightgrey}.97 
 & \cellcolor{highlightgrey}1.0 
 & .00 
 & .00 
 & .00 
 & \cellcolor{highlightgrey}1.0 
 & .00 
 & .57\\ 
 \hline
 \(\bernoulliProb_{\controller}\) at \(t = 10e5\) 
 & .95 
 & \cellcolor{highlightgrey}.99 
 & .00 
 & \cellcolor{highlightgrey}.99 
 & .88 
 & 1.0 
 & .00 
 & .00 
 & \cellcolor{highlightgrey}.99 
 & 1.0 
 & \cellcolor{highlightgrey}.99 
 & \cellcolor{highlightgrey}.99\\
 \hline
\end{tabular}
\caption{\label{tab:sub_task_specifications} Demonstration of automatic subtask specification refinement. 
Each value corresponds to a subtask specification, i.e. the minimum allowable probability of subtask success. 
The two rows of the table show these values at two distinct points of the system's training; before and after the subtask specification refinement illustrated by the dotted red lines in Figure \ref{fig:experiments}. 
The cells highlighted in grey indicate which subsystems are used by the meta-policy, at the specified point.}
\end{table*}

\subsubsection{Empirical Validation of Theorem \ref{thm:hlm_bounds_true_performance}.}
At regular intervals during training, marked by diamonds in Figure \ref{fig:experiments}, each subsystem's probability of subtask success is estimated and used to update \(\lbList\) and \(\ubList\), as described in the previous section. That is, each diamond in Figure \ref{fig:experiments} corresponds to a pass through the main loop of algorithm \ref{alg:ICRL}.
The HLM-predicted probability of the meta-policy completing the overall task is illustrated in Figure \ref{fig:experiments} by the navy blue curve.
For comparison, we plot empirical measurements of the success rate of the meta-policy in black.
We clearly observe that the HLM predictions closely match the empirical measurements. 

\subsubsection{Subtask Specification Refinements Lead to Meta-Policy Adaptation and Targeted Subsystem Training.}
Figure \ref{fig:experiments} illustrates the subsystem training schedule. Table \ref{tab:sub_task_specifications} lists the values of \(\bernoulliProb_{\controller}\) for each subsystem \(\controller\).
We observe from Table \ref{tab:sub_task_specifications} that prior to \(8e5\) elapsed training steps, the value of \(\bernoulliProb_{\controller}\) is only specified to be close to \(1.0\) for subsystems \(\controller_0\), \(\controller_4\), \(\controller_5\), and, \(\controller_9\). 
As can be seen in Figure \ref{fig:labyrinth_gridworld}, these are the subsystems needed to move straight down, through the rooms containing lava, to the goal. 
The HLM has selected a meta-policy that will only use these subsystems because their composition yields the shortest path to goal; this path only requires training of 4 of the subsystems.
Furthermore, because the meta-policy does not use any of the other subsystems, it places no requirements on their probability of subtask success.
Figure \ref{fig:experiments} agrees with this observation: only this small collection of the subsystems are trained prior to \(8e5\) elapsed training steps.
In particular, subsystem \(4\), which must navigate the top lava room and is represented by dark green, is trained extensively.
However, due to the environment slip probability, this subsystem is unable to meet its subtask specification, \textit{safely navigate to the room's exit with probability 0.97}, regardless of the number of training iterations it receives.
As a result, subsystem \(4\) exhausts its individual training budget after \(8e5\) elapsed system training steps, marked by the vertical dotted red lines in Figure \ref{fig:experiments}.
At this point, subsystem \(4\)'s empirically estimated success rate of 0.88 is used to update the HLM, which then refines the subtask specifications.
The result of this refinement is a new meta-policy, which instead uses subsystems \(\controller_1\), \(\controller_3\), \(\controller_8\), \(\controller_{10}\), and \(\controller_{11}\) to take an alternate path that avoids the lava rooms altogether.
The updated subtask specifications are listed in the second row of Table \ref{tab:sub_task_specifications}, and in Figure \ref{fig:experiments} we observe a distinct change in the subsystems that are trained.
Once subsystems \(\controller_1\), \(\controller_3\), \(\controller_8\), \(\controller_{10}\), and \(\controller_{11}\) learn to satisfy their new subtask specifications with the required probability, the composite system's probability of task success rises above \(0.95\), satisfying the overall task specification.

\subsubsection{Comparison to a Monolithic RL Approach.}
The proposed ICRL algorithm takes less than two million training steps to satisfy the task specification.
By comparison, a monolithic approach in which the entire task is treated as a single subsystem takes roughly thirty million training steps.
We note that this is not a fair comparison because the proposed compositional approach has a priori knowledge of the subsystem entry and exit conditions.
However, such information is often available through natural decompositions of complex systems.
The proposed framework provides a method to take advantage of such information when it is available.

\subsubsection{Results in a Continuous Labyrinth Environment.}
To demonstrate the framework's ability to generalize to different RL settings, we also implemented a continuous-state and continuous-action version of the labyrinth environment in the video game engine \textit{Unity} \citep{juliani2018unity}.
In this version of the task, the RL system must roll a ball from the initial location to the goal location.
The set \(\mdpActionSet\) of available actions consists of all of the force vectors, with magnitude of at most 1, that can be applied to the ball in the horizontal plane.
The set \(\mdpStateSet\) of environment states is given by all possible locations \((x,y)\) and velocities \((\dot{x}, \dot{y})\) of the ball within the labyrinth.
The action space \(\mdpActionSet\) is thus a compact subset of \(\mathbb{R}^2\) while the state space \(\mdpStateSet\) is a compact subset of \(\mathbb{R}^4\). 
The transition dynamics are governed by \textit{Unity}'s rigid-body physics simulator.
Subtask entry \(\controllerInitialStateSet_{\controller}\) and exit \(\controllerFinalStateSet_{\controller}\) conditions are implemented as subsets of \(\mathbb{R}^4\) such that \(\sqrt{(x- x_c)^2 + (y-y_c)^2} \leq 0.5m\) and \(\sqrt{\dot{x}^2 + \dot{y}^2} \leq 0.5 \frac{m}{s}\) respectively, for some pre-specified \(x_c\) and \(y_c\).
We use the PPO algorithm to train the RL subsystem policies.
Each RL subsystem receives rewards that are proportional to its negative distance to the exit conditions, and incurs a large penalty if the lava is touched.
We refer to the supplementary material for additional details and figures of this continuous environment.

Figure \ref{fig:cont_lab_training_curves} illustrates the experimental results in the continuous labyrinth environment.
Qualitatively, these results closely resemble our observations from the discrete labyrinth, 
despite significant differences in the environment's dynamics and in its representations of states and actions.
The ICRL algorithm again initially attempts to move straight down past the lava, before automatically refining the subtask specifications in order to focus on training the subsystems that take the alternate route through the labyrinth.
This similarity in the algorithm's behavior when applied to different types of environments helps illustrate the generality of the proposed framework; ICRL is agnostic to the details of the environment dynamics and of the individual RL subsystems.

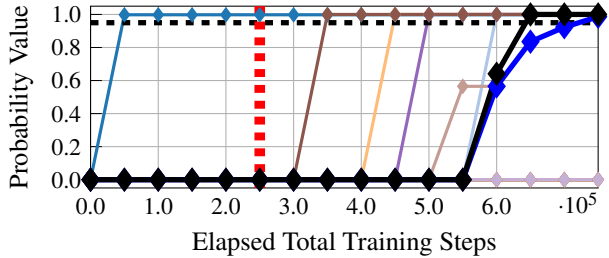
\begin{figure}[t]
    \begin{subfigure}{\columnwidth}
        \centering
\begin{tikzpicture}

\definecolor{color0}{rgb}{0.12156862745098,0.466666666666667,0.705882352941177}
\definecolor{color1}{rgb}{0.682352941176471,0.780392156862745,0.909803921568627}
\definecolor{color2}{rgb}{1,0.498039215686275,0.0549019607843137}
\definecolor{color3}{rgb}{1,0.733333333333333,0.470588235294118}
\definecolor{color4}{rgb}{0.172549019607843,0.627450980392157,0.172549019607843}
\definecolor{color5}{rgb}{0.596078431372549,0.874509803921569,0.541176470588235}
\definecolor{color6}{rgb}{0.83921568627451,0.152941176470588,0.156862745098039}
\definecolor{color7}{rgb}{1,0.596078431372549,0.588235294117647}
\definecolor{color8}{rgb}{0.580392156862745,0.403921568627451,0.741176470588235}
\definecolor{color9}{rgb}{0.772549019607843,0.690196078431373,0.835294117647059}
\definecolor{color10}{rgb}{0.549019607843137,0.337254901960784,0.294117647058824}
\definecolor{color11}{rgb}{0.768627450980392,0.611764705882353,0.580392156862745}

\begin{axis}[
tick align=inside,
tick pos=left,
x grid style={white!69.0196078431373!black},
  ticklabel style = {font=\footnotesize},
xmajorgrids,
xmin=0, xmax=750000,
xtick style={color=black},
y grid style={white!69.0196078431373!black},
ymajorgrids,
ymin=-0.05, ymax=1.05,
height=4.0cm,
width=0.99\textwidth,
ytick={0.0, 0.2, 0.4, 0.6, 0.8, 1.0},
yticklabels={0.0, 0.2, 0.4, 0.6, 0.8, 1.0},
xtick={0, 100000, 200000, 300000, 400000, 500000, 600000, 800000, 1000000, 1200000, 1400000},
xticklabels={0.0, 1.0, 2.0, 3.0, 4.0, 5.0, 6.0, 8.0, 10.0, 12.0, },
ytick style={color=black},
legend to name=named,
every x tick scale label/.style={
    at={(1,0)},xshift=-15.5pt,yshift=-10.0pt,anchor=south west,inner sep=0pt},
ylabel={Probability Value},
xlabel={Elapsed Total Training Steps}
]
\addplot [very thick, color0, mark=diamond*, mark size=2, mark options={solid}, forget plot]
table {%
0 0
0 0
50000 0.998
100000 0.998
150000 0.998
200000 0.998
250000 0.998
300000 0.998
350000 0.998
400000 0.998
450000 0.998
500000 0.998
550000 0.998
600000 0.998
650000 0.998
700000 0.998
750000 0.998
};
\addplot [very thick, color1, mark=diamond*, mark size=2, mark options={solid}, forget plot]
table {%
0 0
0 0
50000 0
100000 0
150000 0
200000 0
250000 0
300000 0
350000 0
400000 0
450000 0
500000 0
550000 0
600000 1
650000 1
700000 1
750000 1
};
\addplot [very thick, color2, mark=diamond*, mark size=2, mark options={solid}, forget plot]
table {%
0 0
0 0
50000 0
100000 0
150000 0
200000 0
250000 0
300000 0
350000 0
400000 0
450000 0
500000 0
550000 0
600000 0
650000 0
700000 0
750000 0
};
\addplot [very thick, color3, mark=diamond*, mark size=2, mark options={solid}, forget plot]
table {%
0 0
0 0
50000 0
100000 0
150000 0
200000 0
250000 0
300000 0
350000 0
400000 0
450000 1
500000 1
550000 1
600000 1
650000 1
700000 1
750000 1
};
\addplot [very thick, color4, mark=diamond*, mark size=2, mark options={solid}, forget plot]
table {%
0 0
0 0
50000 0
100000 0
150000 0
200000 0
250000 0
300000 0
350000 0
400000 0
450000 0
500000 0
550000 0
600000 0
650000 0
700000 0
750000 0
};
\addplot [very thick, color5, mark=diamond*, mark size=2, mark options={solid}, forget plot]
table {%
0 0
0 0
50000 0
100000 0
150000 0
200000 0
250000 0
300000 0
350000 0
400000 0
450000 0
500000 0
550000 0
600000 0
650000 0
700000 0
750000 0
};
\addplot [very thick, color6, mark=diamond*, mark size=2, mark options={solid}, forget plot]
table {%
0 0
0 0
50000 0
100000 0
150000 0
200000 0
250000 0
300000 0
350000 0
400000 0
450000 0
500000 0
550000 0
600000 0
650000 0
700000 0
750000 0
};
\addplot [very thick, color7, mark=diamond*, mark size=2, mark options={solid}, forget plot]
table {%
0 0
0 0
50000 0
100000 0
150000 0
200000 0
250000 0
300000 0
350000 0
400000 0
450000 0
500000 0
550000 0
600000 0
650000 0
700000 0
750000 0
};
\addplot [very thick, color8, mark=diamond*, mark size=2, mark options={solid}, forget plot]
table {%
0 0
0 0
50000 0
100000 0
150000 0
200000 0
250000 0
300000 0
350000 0
400000 0
450000 0
500000 1
550000 1
600000 1
650000 1
700000 1
750000 1
};
\addplot [very thick, color9, mark=diamond*, mark size=2, mark options={solid}, forget plot]
table {%
0 0
0 0
50000 0
100000 0
150000 0
200000 0
250000 0
300000 0
350000 0
400000 0
450000 0
500000 0
550000 0
600000 0
650000 0
700000 0
750000 0
};
\addplot [very thick, color10, mark=diamond*, mark size=2, mark options={solid}, forget plot]
table {%
0 0
0 0
50000 0
100000 0
150000 0
200000 0
250000 0
300000 0
350000 1
400000 1
450000 1
500000 1
550000 1
600000 1
650000 1
700000 1
750000 1
};
\addplot [very thick, color11, mark=diamond*, mark size=2, mark options={solid}, forget plot]
table {%
0 0
0 0
50000 0
100000 0
150000 0
200000 0
250000 0
300000 0
350000 0
400000 0
450000 0
500000 0
550000 0.565
600000 0.565
650000 0.835
700000 0.923
750000 0.986
};
\addplot [line width=2pt, black, dashed]
table {%
0 0.95
750000 0.95
};
\addlegendentry{Required Probability of Success}
\addplot [line width=2pt, blue, mark=diamond*, mark size=3, mark options={solid}]
table {%
0 0
0 0
50000 0
100000 0
150000 0
200000 0
250000 0
300000 0
350000 0
400000 0
450000 0
500000 0
550000 0
600000 0.565
650000 0.835
700000 0.923
750000 0.986
};
\addlegendentry{Lower Bound on Probability of Task Success}
\addplot [line width=2pt, black, mark=diamond*, mark size=3, mark options={solid}]
table {%
0 0
0 0
50000 0
100000 0
150000 0
200000 0
250000 0
300000 0
350000 0
400000 0
450000 0
500000 0
550000 0
600000 0.641
650000 1
700000 1
750000 1
};
\addlegendentry{Empirically Measured Probability of Task Success}
\addplot [line width=4pt, red, dashed, forget plot]
table {%
250000 -0.05
250000 1.05
};
\end{axis}

\end{tikzpicture}
    \end{subfigure}
    \caption{Continuous labyrinth experimental results.}
    \label{fig:cont_lab_training_curves}
\end{figure}

\subsubsection{Additional Discussion.}

We note that all predictions made using the HLM will be sensitive to the values of \(\controllerPerformanceEstimate_{\controller}\) -- the estimated lower bounds on the probability of subtask success.
In our experiments, we compute \(\controllerPerformanceEstimate_{\controller}\) empirically by rolling out the subsystems from randomly sampled entry conditions.
While this technique provides only rough estimates of the true value of the lower bound (particularly in the case of the continuous labyrinth environment which has an uncountably infinite number of entry conditions per subtask), our results demonstrate that these empirical approximations are sufficient for high-level decision making.
The algorithm makes effective use of the HLM predictions to automatically select the subsystems that require training.
Any methods to further improve the estimates of \(\controllerPerformanceEstimate_{\controller}\) will only improve the performance of the ICRL algorithm.
\section{Related Work}

While the proposed framework is closely related to hierarchical RL (HRL) \citep{sutton1999between, barto2003recent, kulkarni2016hierarchical, vezhnevets2017feudal, nachum2019data, levy2017learning}, our framework adds several benefits to existing HRL methods.
These benefits include: a systematic means to decompose and to refine task specifications, explicit reasoning over the probabilities of events, the use of planning-based solution techniques (which could incorporate additional problem constraints), and flexibility in the choice of RL algorithm used to learn subsystem policies.
HRL methods use task decompositions to reduce computational complexity, particularly in problems with large state and action spaces \citep{pateria2021hierarchical}.
However, they typically focus on the efficient maximization of discounted reward and they require the meta-policy to be learned; no model of the high-level problem is explicitly constructed.
By contrast, we present a framework that builds a model of the high-level problem with the specific aim of enabling verifiable RL against a rich set of task specifications (\textit{e.g., safely reach a target set with a required probability of success}), while enjoying a similar reduction in sample complexity.

Compositional verification has been studied in formal methods \cite{namautomatic2008, feng2011automated}, but not in the context of RL.
Conversely, recent works have used structured task knowledge to decompose RL problems, however, they do not study how such information can be used to verify RL systems. 
\citet{camacho2017non} and \citet{littman2017environment} both define a task specification language based on linear temporal logic, and subsequently use it to generate reward functions for RL. 
\citet{sarathy2020spotter} incorporates RL with symbolic planning models to learn new operators -- similar to our subtasks -- to aid in the completion of planning objectives. 
Meanwhile, \citet{icarte2018using, toro2019learning, xu2020joint, icarte2020reward} use reward machines, finite-state machines encoding temporally extended tasks in terms of atomic propositions, to break tasks into stages for which separate policies can be learned.  
\citet{neary2020reward} extends the use of reward machines to the multi-agent RL setting, decomposing team tasks into subtasks for individual learners. 
These works all use structured task knowledge to decompose RL problems, however, they do not provide methods for the automated verification and decomposition of task success probabilities, or for the targeted training of subsystems.
\section{Conclusions}

\label{sec:conclusions}

The verification of reinforcement learning (RL) systems is a critical step towards their widespread deployment in engineering applications.
We develop a framework for verifiable and compositional RL in which collections of RL subsystems are composed to achieve an overall task.
We automatically decompose system-level task specifications into individual subtask specifications, and iteratively refine these subtask specifications while training subsystems to satisfy them.
Future directions will study extensions of the framework to multi-level task hierarchies, compositional multi-agent RL systems, and to systems involving partial information.
\clearpage

\fontsize{9.0pt}{11pt} \selectfont

\section{Acknowledgements}
This work was supported in part by ONR N00014-20-1-2115, ARO W911NF-20-1-0140, and NSF 1652113.

\bibliography{bibliography.bib}

\begin{thebibliography}{30}
\providecommand{\natexlab}[1]{#1}

\bibitem[{Baier and Katoen(2008)}]{baier2008principles}
Baier, C.; and Katoen, J.-P. 2008.
\newblock \emph{Principles of model checking}.
\newblock MIT press.

\bibitem[{Barto and Mahadevan(2003)}]{barto2003recent}
Barto, A.~G.; and Mahadevan, S. 2003.
\newblock Recent advances in hierarchical reinforcement learning.
\newblock \emph{Discrete event dynamic systems}, 13(1): 41--77.

\bibitem[{Camacho et~al.(2017)Camacho, Chen, Sanner, and
  McIlraith}]{camacho2017non}
Camacho, A.; Chen, O.; Sanner, S.; and McIlraith, S.~A. 2017.
\newblock Non-markovian rewards expressed in LTL: guiding search via reward
  shaping.
\newblock \emph{10th Annual Symposium on Combinatorial Search}.

\bibitem[{Chevalier-Boisvert, Willems, and Pal(2018)}]{gym_minigrid}
Chevalier-Boisvert, M.; Willems, L.; and Pal, S. 2018.
\newblock Minimalistic Gridworld Environment for OpenAI Gym.
\newblock \url{https://github.com/maximecb/gym-minigrid}.

\bibitem[{Cubuktepe et~al.(2018)Cubuktepe, Jansen, Junges, Katoen, and
  Topcu}]{cubuktepe2018synthesis}
Cubuktepe, M.; Jansen, N.; Junges, S.; Katoen, J.-P.; and Topcu, U. 2018.
\newblock {Synthesis in pMDPs: A Tale of 1001 Parameters}.
\newblock In \emph{International Symposium on Automated Technology for
  Verification and Analysis}, 160--176. Springer.

\bibitem[{Etessami et~al.(2007)Etessami, Kwiatkowska, Vardi, and
  Yannakakis}]{etessami2007multi}
Etessami, K.; Kwiatkowska, M.; Vardi, M.~Y.; and Yannakakis, M. 2007.
\newblock Multi-objective model checking of Markov decision processes.
\newblock In \emph{International Conference on Tools and Algorithms for the
  Construction and Analysis of Systems}, 50--65. Springer.

\bibitem[{Feng, Kwiatkowska, and Parker(2011)}]{feng2011automated}
Feng, L.; Kwiatkowska, M.; and Parker, D. 2011.
\newblock Automated learning of probabilistic assumptions for compositional
  reasoning.
\newblock In \emph{International Conference on Fundamental Approaches to
  Software Engineering}, 2--17. Springer.

\bibitem[{{Gurobi Optimization, LLC}(2021)}]{gurobi}
{Gurobi Optimization, LLC}. 2021.
\newblock Gurobi Optimizer Reference Manual.
\newblock \url{https://www.gurobi.com/documentation/9.1/refman/index.html}.
\newblock Accessed: 2021-12-15.

\bibitem[{Haberfellner et~al.(2019)Haberfellner, Nagel, Becker, B{\"u}chel, and
  von Massow}]{haberfellner2019systems}
Haberfellner, R.; Nagel, P.; Becker, M.; B{\"u}chel, A.; and von Massow, H.
  2019.
\newblock \emph{Systems engineering}.
\newblock Springer.

\bibitem[{Hahn et~al.(2019)Hahn, Perez, Schewe, Somenzi, Trivedi, and
  Wojtczak}]{hahn2019omega}
Hahn, E.~M.; Perez, M.; Schewe, S.; Somenzi, F.; Trivedi, A.; and Wojtczak, D.
  2019.
\newblock Omega-regular objectives in model-free reinforcement learning.
\newblock In \emph{International Conference on Tools and Algorithms for the
  Construction and Analysis of Systems}, 395--412. Springer.

\bibitem[{Juliani et~al.(2018)Juliani, Berges, Teng, Cohen, Harper, Elion, Goy,
  Gao, Henry, Mattar et~al.}]{juliani2018unity}
Juliani, A.; Berges, V.-P.; Teng, E.; Cohen, A.; Harper, J.; Elion, C.; Goy,
  C.; Gao, Y.; Henry, H.; Mattar, M.; et~al. 2018.
\newblock Unity: A general platform for intelligent agents.
\newblock \emph{arXiv preprint arXiv:1809.02627}.

\bibitem[{Junges et~al.(2019)Junges, Abraham, Hensel, Jansen, Katoen, Quatmann,
  and Volk}]{junges2020parameter}
Junges, S.; Abraham, E.; Hensel, C.; Jansen, N.; Katoen, J.-P.; Quatmann, T.;
  and Volk, M. 2019.
\newblock Parameter Synthesis for Markov Models.
\newblock arXiv:1903.07993.

\bibitem[{Kulkarni et~al.(2016)Kulkarni, Narasimhan, Saeedi, and
  Tenenbaum}]{kulkarni2016hierarchical}
Kulkarni, T.~D.; Narasimhan, K.; Saeedi, A.; and Tenenbaum, J. 2016.
\newblock Hierarchical Deep Reinforcement Learning: Integrating Temporal
  Abstraction and Intrinsic Motivation.
\newblock In \emph{Advances in Neural Information Processing Systems},
  volume~29.

\bibitem[{Levy et~al.(2019)Levy, Konidaris, Platt, and
  Saenko}]{levy2017learning}
Levy, A.; Konidaris, G.~D.; Platt, R.~W.; and Saenko, K. 2019.
\newblock Learning Multi-Level Hierarchies with Hindsight.
\newblock In \emph{International Conference on Learning Representations}.

\bibitem[{Littman et~al.(2017)Littman, Topcu, Fu, Isbell, Wen, and
  MacGlashan}]{littman2017environment}
Littman, M.~L.; Topcu, U.; Fu, J.; Isbell, C.; Wen, M.; and MacGlashan, J.
  2017.
\newblock Environment-Independent Task Specifications via GLTL.
\newblock arXiv:1704.04341.

\bibitem[{Nachum et~al.(2018)Nachum, Gu, Lee, and Levine}]{nachum2019data}
Nachum, O.; Gu, S.~S.; Lee, H.; and Levine, S. 2018.
\newblock Data-Efficient Hierarchical Reinforcement Learning.
\newblock In \emph{Advances in Neural Information Processing Systems},
  volume~31.

\bibitem[{Nam, Madhusudan, and Alur(2008)}]{namautomatic2008}
Nam, W.; Madhusudan, P.; and Alur, R. 2008.
\newblock Automatic Symbolic Compositional Verification by Learning
  Assumptions.
\newblock \emph{Formal Methods in System Design}, 32(3): 207–234.

\bibitem[{Neary et~al.(2021)Neary, Xu, Wu, and Topcu}]{neary2020reward}
Neary, C.; Xu, Z.; Wu, B.; and Topcu, U. 2021.
\newblock Reward Machines for Cooperative Multi-Agent Reinforcement Learning.
\newblock In \emph{Proceedings of the 20th International Conference on
  Autonomous Agents and MultiAgent Systems}, AAMAS '21, 934–942.

\bibitem[{Nuseibeh and Easterbrook(2000)}]{nuseibeh2000requirements}
Nuseibeh, B.; and Easterbrook, S. 2000.
\newblock Requirements engineering: a roadmap.
\newblock In \emph{Proceedings of the Conference on the Future of Software
  Engineering}, 35--46.

\bibitem[{Pateria et~al.(2021)Pateria, Subagdja, Tan, and
  Quek}]{pateria2021hierarchical}
Pateria, S.; Subagdja, B.; Tan, A.-h.; and Quek, C. 2021.
\newblock Hierarchical Reinforcement Learning: A Comprehensive Survey.
\newblock \emph{ACM Computing Surveys (CSUR)}, 54(5): 1--35.

\bibitem[{Puterman(2014)}]{puterman2014markov}
Puterman, M.~L. 2014.
\newblock \emph{Markov decision processes: discrete stochastic dynamic
  programming}.
\newblock John Wiley \& Sons.

\bibitem[{Raffin et~al.(2021)Raffin, Hill, Gleave, Kanervisto, Ernestus, and
  Dormann}]{stable-baselines3}
Raffin, A.; Hill, A.; Gleave, A.; Kanervisto, A.; Ernestus, M.; and Dormann, N.
  2021.
\newblock Stable-Baselines3: Reliable Reinforcement Learning Implementations.
\newblock \emph{Journal of Machine Learning Research}, 22(268): 1--8.

\bibitem[{Sarathy et~al.(2021)Sarathy, Kasenberg, Goel, Sinapov, and
  Scheutz}]{sarathy2020spotter}
Sarathy, V.; Kasenberg, D.; Goel, S.; Sinapov, J.; and Scheutz, M. 2021.
\newblock SPOTTER: Extending Symbolic Planning Operators through Targeted
  Reinforcement Learning.
\newblock In \emph{Proceedings of the 20th International Conference on
  Autonomous Agents and MultiAgent Systems}, AAMAS '21, 1118–1126.

\bibitem[{Schulman et~al.(2017)Schulman, Wolski, Dhariwal, Radford, and
  Klimov}]{schulman2017proximal}
Schulman, J.; Wolski, F.; Dhariwal, P.; Radford, A.; and Klimov, O. 2017.
\newblock Proximal Policy Optimization Algorithms.
\newblock arXiv:1707.06347.

\bibitem[{Sutton, Precup, and Singh(1999)}]{sutton1999between}
Sutton, R.~S.; Precup, D.; and Singh, S. 1999.
\newblock Between MDPs and semi-MDPs: A framework for temporal abstraction in
  reinforcement learning.
\newblock \emph{Artificial intelligence}, 112(1-2): 181--211.

\bibitem[{Toro~Icarte et~al.(2018)Toro~Icarte, Klassen, Valenzano, and
  McIlraith}]{icarte2018using}
Toro~Icarte, R.; Klassen, T.; Valenzano, R.; and McIlraith, S. 2018.
\newblock Using reward machines for high-level task specification and
  decomposition in reinforcement learning.
\newblock \emph{International Conference on Machine Learning}, 2107--2116.

\bibitem[{Toro~Icarte et~al.(2022)Toro~Icarte, Klassen, Valenzano, and
  McIlraith}]{icarte2020reward}
Toro~Icarte, R.; Klassen, T.~Q.; Valenzano, R.; and McIlraith, S.~A. 2022.
\newblock Reward machines: Exploiting reward function structure in
  reinforcement learning.
\newblock \emph{Journal of Artificial Intelligence Research}, 73: 173--208.

\bibitem[{Toro~Icarte et~al.(2019)Toro~Icarte, Waldie, Klassen, Valenzano,
  Castro, and McIlraith}]{toro2019learning}
Toro~Icarte, R.; Waldie, E.; Klassen, T.; Valenzano, R.; Castro, M.; and
  McIlraith, S. 2019.
\newblock Learning Reward Machines for Partially Observable Reinforcement
  Learning.
\newblock In \emph{Advances in Neural Information Processing Systems},
  volume~32.

\bibitem[{Vezhnevets et~al.(2017)Vezhnevets, Osindero, Schaul, Heess,
  Jaderberg, Silver, and Kavukcuoglu}]{vezhnevets2017feudal}
Vezhnevets, A.~S.; Osindero, S.; Schaul, T.; Heess, N.; Jaderberg, M.; Silver,
  D.; and Kavukcuoglu, K. 2017.
\newblock Feudal networks for hierarchical reinforcement learning.
\newblock In \emph{International Conference on Machine Learning}, 3540--3549.
  PMLR.

\bibitem[{Xu et~al.(2020)Xu, Gavran, Ahmad, Majumdar, Neider, Topcu, and
  Wu}]{xu2020joint}
Xu, Z.; Gavran, I.; Ahmad, Y.; Majumdar, R.; Neider, D.; Topcu, U.; and Wu, B.
  2020.
\newblock Joint inference of reward machines and policies for reinforcement
  learning.
\newblock \emph{Proceedings of the International Conference on Automated
  Planning and Scheduling}, 30: 590--598.

\end{thebibliography}

\newpage

\onecolumn
\fontsize{11pt}{12pt} \selectfont

\begin{center} 
    \begin{LARGE}
        \textbf{Verifiable and Compositional Reinforcement Learning Systems: \\Supplementary Material}
    \end{LARGE}
\end{center}

\appendix
\section{Proof of Theorem 1}

\newcommand{\hlmTimeHorizon}{N}
\newcommand{\metaDecisionTime}{\tau}
\newcommand{\numMetaDecision}{m}
\newcommand{\numTimeStep}{n}
\newcommand{\reachTrajectories}{\Gamma}
\newcommand{\sigmaAlg}{\Sigma}
\newcommand{\measure}{\mathbb{P}}

In this section, we provide a proof of Theorem \ref{thm:hlm_bounds_true_performance}.

\textbf{Intuition of the Proof. }
While the details of the proof are provided in the remainder of this section, we begin by outlining the intuition behind the proof, which is relatively straightforward. We want to show that if for every subsystem \(\controller \in \controllerSet\) and every entry condition \(\mdpState \in \controllerInitialStateSet_{\controller}\) we have \(\mathbb{P}_{\mdp, \policy_{\controller}}^{\mdpState}(\Diamond_{\leq \timeHorizon_{\controller}} \controllerFinalStateSet_{\controller}) \geq \bernoulliProb_{\controller}\), then \(\mathbb{P}_{\mdp, \abstractPolicy}^{\mdpInitialState}(\Diamond \controllerFinalStateSet_{targ}) \geq \mathbb{P}_{\abstractMDP, \hlmPolicy}^{\abstractInitialState}(\Diamond \abstractSuccessState)\).
We begin by defining \(\mathbb{P}_{\abstractMDP, \hlmPolicy}^{\abstractInitialState}(\Diamond \abstractSuccessState)\) as the sum of the probabilities of every sequence \(\abstractState_0 \controller_0 \abstractState_1 \controller_1 ... \abstractState_{\numMetaDecision}\) of HLM states that eventually reaches the goal state \(\abstractSuccessState\). 
Then, using the theorem's assumption, we observe that for each such sequence there exists a collection of sequences \(\mdpState_0 \controller_0 \mdpAction_0 \mdpState_1 \controller_1 \mdpAction_1 ... \mdpState_{\numTimeStep}\) of environment states, subsystems, and actions that has a higher probability value under the measure \(\mathbb{P}_{\mdp, \abstractPolicy}^{\abstractInitialState}(\cdot)\) than the original sequence had under measure \(\mathbb{P}_{\abstractMDP, \hlmPolicy}^{\abstractInitialState}(\cdot)\).
Finally, we note that every such sequence \(\mdpState_0 \controller_0 \mdpAction_0 \mdpState_1 \controller_1 \mdpAction_1 ... \mdpState_{\numTimeStep}\) of environment states, subsystems, and actions eventually reaches the target set \(\controllerFinalStateSet_{targ}\) and that the aforementioned collections of these sequences are pairwise disjoint. From this, we are able to conclude the result of the theorem.

\textbf{Preliminary Definitions. }
We begin by defining \(\mathbb{P}^{\abstractInitialState}_{\abstractMDP, \hlmPolicy}(\Diamond \abstractSuccessState)\). 
To do so, we define the probability space associated with the high-level model (HLM) \(\abstractMDP\) by following the formalisms laid out in Chapter 10 of \cite{baier2008principles}.
Recall that the HLM \(\abstractMDP = (\abstractStateSet, \abstractInitialState, \abstractSuccessState, \abstractFailureState, \controllerSet, \abstractTransition)\) is a parametric Markov decision process (pMDP) with states \(\abstractStateSet\), action set \(\controllerSet\), and transition probability function \(\abstractTransition\) parametrized by \(\bernoulliProb_{\controller}\) for \(\controller \in \controllerSet\). 
Also, \(\hlmPolicy : \mdpStateSet \times \controllerSet \to [0,1]\) is a stationary policy on \(\abstractMDP\).

Let \(Paths(\abstractMDP, \hlmPolicy, \abstractInitialState)\) denote the set of all possible infinite sequences \(\abstractState_0 \controller_0 \abstractState_1 \controller_1 .. \in (\abstractStateSet \times \controllerSet)^{\omega}\) such that \(\abstractState_0 = \abstractInitialState\) and for every \(i \in \{0,1,...\}\), \(\hlmPolicy(\abstractState_i, \controller_i) > 0\) and \(\abstractTransition(\abstractState_i, \controller_i, \abstractState_{i+1}) > 0\). 
These are the \textit{infinite paths} of state-action pairs that could occur in MDP \(\abstractMDP\) under policy \(\hlmPolicy\), which play the role of the outcomes of our probability space.
Then, define \(Paths_{fin}(\abstractMDP, \hlmPolicy, \abstractInitialState)\) to be the set of all \textit{finite path fragments}, i.e. of all finite sequences \(\abstractState_0 \controller_0 \abstractState_1 \controller_1 ... \abstractState_{\numMetaDecision - 1} \controller_{\numMetaDecision - 1} \abstractState_{\numMetaDecision} \in (\abstractState \times \controllerSet)^*\) such that \(\abstractState_0 = \abstractInitialState\) and for every \(i \in \{0,1,...,\numMetaDecision - 1\}\), \(\hlmPolicy(\abstractState_i, \controller_i) >0\) and \(\abstractTransition(\abstractState_i, \controller_i, \abstractState_{i+1}) > 0\).
The \(\sigma\)-algebra -- denoted by \(\sigmaAlg_{\abstractMDP, \hlmPolicy}^{\mdpInitialState}\) -- which plays the role of the event set of our probability space, is the smallest \(\sigma\)-algebra containing all \textit{cylinder sets}, denoted \(Cyl(\abstractState_0 \controller_0 ... \abstractState_{\numMetaDecision})\), of all finite path fragments \(\abstractState_0 \controller_0 ... \abstractState_{\numMetaDecision} \in Paths_{fin}(\abstractMDP, \hlmPolicy, \abstractInitialState)\). The unique probability measure on the \(\sigma\)-algebra \(\sigmaAlg_{\abstractMDP, \hlmPolicy}^{\abstractInitialState}\) is denoted by \(\measure_{\abstractMDP, \hlmPolicy}^{\abstractInitialState} : \sigmaAlg_{\abstractMDP, \hlmPolicy}^{\abstractInitialState} \to [0,1]\) and is defined in Chapter 10 of \cite{baier2008principles}.

We now define \(\reachTrajectories_{\abstractMDP, \hlmPolicy}^{\abstractState, \abstractSuccessState}\) to be the set of all finite path fragments that reach the goal state \(\abstractSuccessState \in \abstractStateSet\). That is,

\[\reachTrajectories_{\abstractMDP, \hlmPolicy}^{\abstractState, \abstractSuccessState} \defeq Paths_{fin}(\abstractMDP, \hlmPolicy, \abstractState) \cap ((\abstractStateSet \setminus \abstractSuccessState) \times \controllerSet)^* \{\abstractSuccessState\}.\]

Then, since the cylinder sets of all these finite path fragments are pairwise disjoint, we have

\begin{align}
    \mathbb{P}^{\abstractInitialState}_{\abstractMDP, \hlmPolicy}(\Diamond \abstractSuccessState)
    &= \measure_{\abstractMDP, \hlmPolicy}^{\abstractInitialState}(Cyl(\reachTrajectories_{\abstractMDP, \hlmPolicy}^{\abstractInitialState, \abstractSuccessState})) \label{eq:def_hlm_reach_prob}\\
    &= \sum_{\abstractState_0 \controller_0 ...\abstractState_{\numMetaDecision} \in \reachTrajectories_{\abstractMDP, \hlmPolicy}^{\abstractInitialState, \abstractSuccessState}} \measure_{\abstractMDP, \hlmPolicy}^{\abstractInitialState}(Cyl(\abstractState_0 \controller_0 ... \abstractState_{\numMetaDecision})) \label{eq:def_hlm_reach_prob_sum}
\end{align}

In order to define the probability of task success \(\mathbb{P}^{\mdpInitialState}_{\mdp, \abstractPolicy}(\Diamond \controllerFinalStateSet_{targ})\) in MDP \(\mdp\) under meta-policy \(\abstractPolicy\), we may similarly define the infinite paths \(Paths(\mdp, \abstractPolicy, \mdpInitialState)\) to be the set of all infinite sequences \(\mdpState_0 \controller_0 \mdpAction_0 \mdpState_1 \controller_1 \mdpAction_1 ... \in (\mdpStateSet \times \controllerSet \times \mdpActionSet)^{\omega}\) that have positive probability under meta-policy \(\abstractPolicy : \mdpStateSet \times \controllerSet \to [0,1]\) acting in MDP \(\mdp\) from initial state \(\mdpInitialState\).
By similarly defining the finite path fragments \(Paths_{fin}(\mdp, \abstractPolicy, \mdpInitialState)\) we may define the \(\sigma\)-algebra \(\sigmaAlg_{\mdp, \abstractPolicy}^{\mdpInitialState}\) and the corresponding probability measure \(\measure_{\mdp, \abstractPolicy}^{\abstractInitialState}(\cdot)\).

We now define the set \(\reachTrajectories_{\mdp, \abstractPolicy}^{\abstractInitialState, \controllerFinalStateSet_{targ}}\) of all finite path fragments \(\mdpState_0 \controller_0 \mdpAction_0 ... \mdpState_{\numTimeStep - 1} \controller_{\numTimeStep - 1} \mdpAction_{\numTimeStep - 1} \mdpState_{\numTimeStep} \in Paths_{fin}(\mdp, \abstractPolicy, \mdpInitialState)\) such that for every \(t < \numTimeStep\), \(\mdpState_t \in \controllerFinalStateSet_{\controller_{t-1}}\) and \(\controllerFinalStateSet_{\controller_{t-1}} = \controllerFinalStateSet_{targ}\) are not both true, and at time \(t = \numTimeStep\), \(\mdpState_{\numTimeStep} \in \controllerFinalStateSet_{\controller_{\numTimeStep - 1}}\) and \(\controllerFinalStateSet_{\controller_{\numTimeStep - 1}} = \controllerFinalStateSet_{targ}\). 
Then, similarly to as in \eqref{eq:def_hlm_reach_prob}, we have

\begin{equation}
    \mathbb{P}^{\mdpInitialState}_{\mdp, \abstractPolicy}(\Diamond \controllerFinalStateSet_{targ}) = \measure_{\mdp, \abstractPolicy}^{\mdpInitialState}(Cyl(\reachTrajectories_{\mdp, \abstractPolicy}^{\mdpInitialState, \controllerFinalStateSet_{targ}})).
\end{equation}

Given a finite HLM path fragment \(\abstractState_0 \hat{\controller}_0 ... \abstractState_{\numMetaDecision} \in Paths_{fin}(\abstractMDP, \hlmPolicy, \abstractInitialState)\), we define the collection of \textit{compatible environment path fragments} \(\reachTrajectories_{\mdp, \abstractPolicy}(\abstractState_0 \controller_0...\abstractState_{\numMetaDecision})\) to be the set of all finite path fragments \(\mdpState_0 \controller_0 \mdpAction_0 ... \mdpState_{\numTimeStep} \in Paths_{fin}(\mdp, \abstractPolicy, \mdpInitialState)\) such that there exists a collection of \textit{meta-decision times} \(0 = \metaDecisionTime_0 < \metaDecisionTime_1 < ... < \metaDecisionTime_{\numMetaDecision} = \numTimeStep\) with \(\controller_{\metaDecisionTime_{i}} = \controller_{\metaDecisionTime_{i}+1} = ... = \controller_{\metaDecisionTime_{i+1}-1} = \hat{\controller}_i\), and \(\mdpState_{\metaDecisionTime_i}, \mdpState_{\metaDecisionTime_i + 1},...,\mdpState_{\metaDecisionTime_{i+1} - 1} \notin \abstractState_{i+1}\), and \(\mdpState_{\metaDecisionTime_i} \in \abstractState_{i}\) for every \(i \in \{0,1,...,\numMetaDecision\}\) (recall that HLM states \(\abstractState\) correspond to collections of environment states \(\mdpState\)).
Here we use the hat \(\hat{\controller}_i\) to distinguish the \(i^{th}\) subsystem in the HLM path fragment \(\abstractState_0 \hat{\controller}_0 ... \abstractState_{\numMetaDecision}\) from the subsystem \(\controller_i\) of the same index within the environment path fragment \(\mdpState_0 \controller_0 \mdpAction_0 ... \mdpState_{\numTimeStep}\).

\begin{lemma}

Given any finite path fragment \(\abstractState_0 \controller_0 ...\abstractState_{\numMetaDecision} \in Paths_{fin}(\abstractMDP, \hlmPolicy, \abstractInitialState)\) such that \(\abstractState_0, \abstractState_1, ..., \abstractState_{\numMetaDecision} \neq \abstractFailureState\). If, for every subsystem \(\controller \in \controllerSet\) and for every entry condition \(\mdpState \in \controllerInitialStateSet_{\controller}\) we have \(\mathbb{P}_{\mdp, \policy_{\controller}}^{\mdpState}(\Diamond_{\leq \timeHorizon_{\controller}}\controllerFinalStateSet_{\controller}) \geq \bernoulliProb_{\controller}\), then the following inequality holds.

\begin{equation}
    \measure_{\mdp, \abstractPolicy}^{\mdpInitialState}(Cyl(\reachTrajectories_{\mdp, \abstractPolicy}(\abstractState_0 \controller_0 ... \abstractState_{\numMetaDecision}))) \geq \measure_{\abstractMDP, \hlmPolicy}^{\abstractInitialState}(Cyl(\abstractState_0 \controller_0 ... \abstractState_{\numMetaDecision}))
\end{equation}
\end{lemma}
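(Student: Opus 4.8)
The plan is to prove the inequality by induction on the length $\numMetaDecision$ of the HLM path fragment, after first putting the right-hand side in closed form. For that side, I would note that since none of $\abstractState_0, \ldots, \abstractState_{\numMetaDecision}$ equals the failure state $\abstractFailureState$, and since membership in $Paths_{fin}(\abstractMDP, \hlmPolicy, \abstractInitialState)$ requires $\abstractTransition(\abstractState_i, \controller_i, \abstractState_{i+1}) > 0$, the definition of $\abstractTransition$ forces $\abstractState_{i+1} = succ(\controller_i)$ with $\abstractTransition(\abstractState_i, \controller_i, \abstractState_{i+1}) = \bernoulliProb_{\controller_i}$. Hence the cylinder-set measure factorizes as $\measure_{\abstractMDP, \hlmPolicy}^{\abstractInitialState}(Cyl(\abstractState_0 \controller_0 \ldots \abstractState_{\numMetaDecision})) = \prod_{i=0}^{\numMetaDecision - 1} \hlmPolicy(\abstractState_i, \controller_i)\, \bernoulliProb_{\controller_i}$, so it suffices to lower-bound the environment-side measure by this same product.

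To enable a clean induction I would strengthen the claim to an arbitrary starting state $\mdpState \in \abstractState_0$ in place of $\mdpInitialState$, replacing $Paths_{fin}(\mdp, \abstractPolicy, \mdpInitialState)$ by $Paths_{fin}(\mdp, \abstractPolicy, \mdpState)$ in the definition of $\reachTrajectories_{\mdp, \abstractPolicy}$, and proving $\measure_{\mdp, \abstractPolicy}^{\mdpState}(Cyl(\reachTrajectories_{\mdp, \abstractPolicy}(\abstractState_0 \controller_0 \ldots \abstractState_{\numMetaDecision}))) \geq \prod_{i=0}^{\numMetaDecision - 1} \hlmPolicy(\abstractState_i, \controller_i)\, \bernoulliProb_{\controller_i}$ for every such $\mdpState$. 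The base case $\numMetaDecision = 0$ is immediate, since both the measure and the empty product equal $1$. For the inductive step I would condition on the first subsystem execution: from $\mdpState$ the meta-policy selects $\controller_0$ with probability $\abstractPolicy(\mdpState, \controller_0) = \hlmPolicy(\abstractState_0, \controller_0)$ by the definition $\abstractPolicy(\mdpState, \controller) \defeq \hlmPolicy([\mdpState]_{\eqRelation}, \controller)$, after which $\policy_{\controller_0}$ runs until it first reaches an exit condition in $\controllerFinalStateSet_{\controller_0} \subseteq \abstractState_1 = succ(\controller_0)$ within $\timeHorizon_{\controller_0}$ steps, terminating at some $\mdpState' \in \abstractState_1$.

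The core of the argument is the resulting decomposition
\[
\measure_{\mdp, \abstractPolicy}^{\mdpState}\!\left(Cyl(\reachTrajectories_{\mdp, \abstractPolicy}(\abstractState_0 \controller_0 \ldots \abstractState_{\numMetaDecision}))\right)
= \hlmPolicy(\abstractState_0, \controller_0) \sum_{\mdpState' \in \abstractState_1} w_{\controller_0}(\mdpState, \mdpState')\,
\measure_{\mdp, \abstractPolicy}^{\mdpState'}\!\left(Cyl(\reachTrajectories_{\mdp, \abstractPolicy}(\abstractState_1 \controller_1 \ldots \abstractState_{\numMetaDecision}))\right),
\]
where $w_{\controller_0}(\mdpState, \mdpState')$ is the probability that the first exit condition reached under $\policy_{\controller_0}$ within $\timeHorizon_{\controller_0}$ is $\mdpState'$, so that $\sum_{\mdpState' \in \abstractState_1} w_{\controller_0}(\mdpState, \mdpState') = \successProb^{\controller_0}_{\policy_{\controller_0}}(\mdpState)$. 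This step relies on the execution semantics, namely that the process restarts at the exit condition at the meta-decision time $\metaDecisionTime_1$; I expect the main obstacle to be making this factorization over infinite-horizon cylinder sets rigorous and verifying that the compatible fragments partition correctly according to the first hitting state $\mdpState'$.

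Given the decomposition, composability closes the argument: because $\controller_1 \in \controllerSet(\abstractState_1)$, every $\mdpState' \in \abstractState_1$ is an entry condition of $\controller_1$, so the inductive hypothesis applies uniformly and bounds each recursive term by $\prod_{i=1}^{\numMetaDecision-1}\hlmPolicy(\abstractState_i, \controller_i)\,\bernoulliProb_{\controller_i}$. Substituting, using the hypothesis $\successProb^{\controller_0}_{\policy_{\controller_0}}(\mdpState) \geq \bernoulliProb_{\controller_0}$ (valid since $\mdpState \in \abstractState_0$ implies $\mdpState \in \controllerInitialStateSet_{\controller_0}$), and factoring out $\hlmPolicy(\abstractState_0, \controller_0)\,\bernoulliProb_{\controller_0}$ yields $\prod_{i=0}^{\numMetaDecision - 1} \hlmPolicy(\abstractState_i, \controller_i)\, \bernoulliProb_{\controller_i}$, which completes the induction and, specialized to $\mdpState = \mdpInitialState$, establishes the lemma.
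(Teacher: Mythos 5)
Your proposal is correct, and it rests on the same two pillars as the paper's own proof: the subtask bound holds uniformly over entry conditions, and composability guarantees that every state of \(\abstractState_{i+1}\) in which subsystem \(\controller_i\) can terminate is an entry condition of \(\controller_{i+1}\) (with \(\controllerFinalStateSet_{\controller_i} \subseteq \abstractState_{i+1}\) because the fragment avoids \(\abstractFailureState\)). The organization of the induction, however, is genuinely different. The paper inducts \emph{forward} on prefixes \(\abstractState_0 \controller_0 \ldots \abstractState_l\), always anchored at \(\mdpInitialState\); to extend a prefix by one step it factors the measure of the longer compatible set as the measure of the shorter one times a mixture \(\sum_{\mdpState \in \abstractState_l} \alpha(\mdpState)\, \abstractPolicy(\mdpState, \controller_l)\, \mathbb{P}^{\mdpState}_{\mdp, \policy_{\controller_l}}(\Diamond_{\leq \timeHorizon_{\controller_l}} \abstractState_{l+1})\), where \(\alpha\) is ``some'' (unnamed, prefix-dependent) distribution of the environment state at the \(l\)-th meta-decision time, and then bounds the mixture below by \(\bernoulliProb_{\controller_l}\) using uniformity over entry conditions. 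You instead induct on the \emph{suffix}, strengthening the claim to an arbitrary start state \(\mdpState \in \abstractState_0\) and first writing the right-hand side in closed form as \(\prod_{i} \hlmPolicy(\abstractState_i, \controller_i)\, \bernoulliProb_{\controller_i}\); with the strengthened hypothesis, the conditional distribution over states at the meta-decision time never needs to be named, since the recursive bound holds uniformly over the first-hitting state \(\mdpState'\). This buys a cleaner treatment of exactly the step the paper leaves vague, at the cost of stating and proving a more general claim. Two small points to mind: first, under the paper's definition of compatible fragments the meta-decision occurs at the first hitting of \(\abstractState_1\) rather than of \(\controllerFinalStateSet_{\controller_0}\), so your weights satisfy \(\sum_{\mdpState' \in \abstractState_1} w_{\controller_0}(\mdpState, \mdpState') \geq \successProb^{\controller_0}_{\policy_{\controller_0}}(\mdpState)\) rather than equality; the inequality direction is the one you need, and it is the same inequality \(\mathbb{P}^{\mdpState}_{\mdp, \policy_{\controller_l}}(\Diamond_{\leq \timeHorizon_{\controller_l}}\abstractState_{l+1}) \geq \mathbb{P}^{\mdpState}_{\mdp, \policy_{\controller_l}}(\Diamond_{\leq \timeHorizon_{\controller_l}}\controllerFinalStateSet_{\controller_l}) \geq \bernoulliProb_{\controller_l}\) the paper invokes. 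Second, the factorization at the meta-decision time that you flag as the main obstacle is precisely what the paper glosses over via \(\alpha\), so your argument is at no disadvantage in rigor on that point.
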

\begin{proof}
This inequality follows from the Theorem's assumption that for every subsystem \(\controller \in \controllerSet\) and for every entry condition \(\mdpState \in \controllerInitialStateSet_{\controller}\), we have \(\successProb_{\policy_{\controller}}^{\controller}(\mdpState) \geq \bernoulliProb_{\controller}\) (recall that \(\successProb_{\policy_{\controller}}^{\controller}(\mdpState) \defeq \mathbb{P}_{\mdp, \policy_{\controller}}^{\mdpState}(\Diamond_{\leq \timeHorizon_{\controller}}\controllerFinalStateSet_{\controller})\)).
Given the finite path fragment \(\abstractState_0 \controller_0 ... \abstractState_{\numMetaDecision} \in Paths_{fin}(\abstractMDP, \hlmPolicy, \abstractInitialState)\) we proceed by induction.

Consider the trivial prefix \(\abstractState_0 \in Paths_{fin}(\abstractMDP, \abstractPolicy, \abstractInitialState)\) of the path fragment.
By definition, we have \(\reachTrajectories_{\mdp, \abstractPolicy}(\abstractState_0) = \{\mdpState_0 \in Paths_{fin}(\mdp, \abstractPolicy, \mdpInitialState) | \mdpState_0 \in \abstractState_0\}\).
Now, by the definitions of \(Paths_{fin}(\abstractMDP, \hlmPolicy, \abstractInitialState)\) and \(Paths_{fin}(\mdp, \abstractPolicy, \mdpInitialState)\), we have \(\abstractState_0 = \abstractInitialState\) and \(\mdpState_0 = \mdpInitialState\).
This implies that \(Cyl(\mdpState_0) = Paths(\mdp, \abstractPolicy, \mdpInitialState)\) and \(Cyl(\abstractState_0) = Paths(\abstractMDP, \hlmPolicy, \abstractInitialState)\) and thus, trivially, \(\measure_{\mdp, \abstractPolicy}^{\mdpInitialState}(Cyl(\mdpState_0)) = \measure_{\abstractMDP, \hlmPolicy}^{\abstractInitialState}(Cyl(\abstractState_0)) = 1\).

Now, for any \(0 \leq l \leq \numMetaDecision - 1\) we consider the prefix \(\abstractState_0 \controller_0 ... \abstractState_{l} \in Paths_{fin}(\abstractMDP, \hlmPolicy, \abstractInitialState)\) of the path fragment \(\abstractState_0 \controller_0 ... \abstractState_l \controller_l ... \abstractState_{\numMetaDecision}\).
Suppose that \(\measure_{\mdp, \abstractPolicy}^{\mdpInitialState}(Cyl(\reachTrajectories_{\mdp, \abstractPolicy}(\abstractState_0 \controller_0 ... \abstractState_l))) \geq \measure_{\abstractMDP, \hlmPolicy}^{\abstractInitialState}(Cyl(\abstractState_0 \controller_0 ... \abstractState_l))\).
We may write the probability of \(Cyl(\reachTrajectories_{\mdp, \abstractPolicy}(\abstractState_0 \controller_0 ... \abstractState_l \controller_l \abstractState_{l+1})))\), corresponding to the prefix of length \(l+1\), in terms of the probability of the prefix of length \(l\) and the probability of all environment path fragments compatible with the HLM transition from \(\abstractState_l\) to \(\abstractState_{l+1}\), as follows: \(\measure_{\mdp, \abstractPolicy}^{\mdpInitialState}(Cyl(\reachTrajectories_{\mdp, \abstractPolicy}(\abstractState_0 \controller_0 ... \abstractState_l \controller_l \abstractState_{l+1}))) = \measure_{\mdp, \abstractPolicy}^{\mdpInitialState}(Cyl(\reachTrajectories_{\mdp, \abstractPolicy}(\abstractState_0 \controller_0 ... \abstractState_l))) \sum_{\mdpState \in \abstractState_l} \alpha(\mdpState) * \abstractPolicy(\mdpState, \controller_l) * \mathbb{P}_{\mdp, \policy_{\controller_l}}^{\mdpState}(\Diamond_{\leq \timeHorizon_{\controller_l}} \abstractState_{l+1})\).
Here, \(\alpha(\mdpState)\) is \textit{some} distribution such that \(\sum_{\mdpState \in \abstractState_l} \alpha(\mdpState) = 1\).
Note that from our definition of the meta-policy \(\abstractPolicy\) in terms of \(\hlmPolicy\),  \(\abstractPolicy(\mdpState, \controller) \defeq \hlmPolicy([\mdpState]_{\eqRelation}, \controller)\), we have \(\abstractPolicy(\mdpState, \controller_{l}) = \hlmPolicy(\abstractState_l, \controller_l)\) for every \(\mdpState \in \abstractState_l\).
Furthermore, as \(\abstractState_{l+1} \neq \abstractFailureState \) by assumption, it must be the case that \(\abstractState_{l+1} = succ(\controller_l)\). This implies, by definition, that \(\controllerFinalStateSet_{\controller_l} \subseteq \abstractState_{l+1}\). Thus, \(\mathbb{P}_{\mdp, \policy_{\controller_l}}^{\mdpState}(\Diamond_{\leq \timeHorizon_{\controller_l}}\abstractState_{l+1}) \geq \mathbb{P}_{\mdp, \policy_{\controller_l}}^{\mdpState}(\Diamond_{\leq \timeHorizon_{\controller_l}} \controllerFinalStateSet_{\controller_l}) \geq \bernoulliProb_{\controller_l}\), where the final inequality follows from the lemma's assumption and from the fact that \(\mdpState \in \controllerInitialStateSet_{\controller_l}\) for every \(\mdpState \in \abstractState_l\).
Putting all this together, we have \(\measure_{\mdp, \abstractPolicy}^{\mdpInitialState}(Cyl(\reachTrajectories_{\mdp, \abstractPolicy}(\abstractState_0 \controller_0 ... \abstractState_l \controller_l \abstractState_{l+1}))) \geq \measure_{\mdp, \abstractPolicy}^{\mdpInitialState}(Cyl(\reachTrajectories_{\mdp, \abstractPolicy}(\abstractState_0 \controller_0 ... \abstractState_l))) * \hlmPolicy(\abstractState_l, \controller_l) * \bernoulliProb_{\controller_l}\)
and given the assumption of our induction step that \(\measure_{\mdp, \abstractPolicy}^{\mdpInitialState}(Cyl(\reachTrajectories_{\mdp, \abstractPolicy}(\abstractState_0 \controller_0 ... \abstractState_l))) \geq \measure_{\abstractMDP, \hlmPolicy}^{\abstractInitialState}(Cyl(\abstractState_0 \controller_0 ... \abstractState_l))\), we obtain the inequality 
\(\measure_{\mdp, \abstractPolicy}^{\mdpInitialState}(Cyl(\reachTrajectories_{\mdp, \abstractPolicy}(\abstractState_0 \controller_0 ... \abstractState_l \controller_l \abstractState_{l+1}))) \geq \measure_{\abstractMDP, \hlmPolicy}^{\abstractInitialState}(Cyl(\abstractState_0 \controller_0 ... \abstractState_l)) * \hlmPolicy(\abstractState_l, \controller_l) * \bernoulliProb_{\controller_l} = \measure_{\abstractMDP, \hlmPolicy}^{\abstractInitialState}(Cyl(\abstractState_0 \controller_0 ... \abstractState_l \controller_l \abstractState_{l+1}))\).
Here, the final equality follows from the definition of the transition probability \(\abstractTransition(\abstractState_{l}, \controller_l, \abstractState_{l+1})\) in terms of the parameter value \(\bernoulliProb_{\controller_{l}}\).
By induction, we conclude the proof.

\label{lemma:prob_bound_single_hlm_path}

\end{proof}

With this lemma in place, we are now ready to prove Theorem \ref{thm:hlm_bounds_true_performance}.

\begin{manualtheorem}{1}
Let \(\controllerSet = \{\controller_1, \controller_2, ..., \controller_{\numControllers}\}\) be a collection of composable subsystems with respect to initial state \(\mdpInitialState\) and target set \(\controllerFinalStateSet_{targ}\) within the environment MDP \(\mdp\). Define \(\abstractMDP\) to be the corresponding HLM and let \(\hlmPolicy\) be a policy in \(\abstractMDP\). If, for every subsystem \(\controller \in \controllerSet\) and for every entry condition \(\mdpState \in \controllerInitialStateSet_{\controller}\), \(\successProb_{\policy_{\controller}}^{\controller}(\mdpState) \geq \bernoulliProb_{\controller}\), then \(\mathbb{P}^{\mdpInitialState}_{\mdp, \abstractPolicy}(\Diamond \controllerFinalStateSet_{targ}) \geq \mathbb{P}^{\abstractInitialState}_{\abstractMDP, \hlmPolicy}(\Diamond \abstractSuccessState)\).
\end{manualtheorem}%

\begin{proof}
Consider any finite path fragment \(\abstractState_0 \controller_0 ...\abstractState_{\numMetaDecision} \in \reachTrajectories_{\abstractMDP, \hlmPolicy}^{\abstractInitialState, \abstractSuccessState}\) that reaches the goal state \(\abstractSuccessState\) in the HLM \(\abstractMDP\).
By Lemma 1, we know that \(\measure_{\mdp, \abstractPolicy}^{\mdpInitialState}(Cyl(\reachTrajectories_{\mdp, \abstractPolicy}(\abstractState_0 \controller_0 ... \abstractState_{\numMetaDecision}))) \geq \measure_{\abstractMDP, \hlmPolicy}^{\abstractInitialState}(Cyl(\abstractState_0 \controller_0 ... \abstractState_{\numMetaDecision}))\), which implies that

\begin{align}
    \sum_{\abstractState_0 \controller_0 ... \abstractState_{\numMetaDecision} \in \reachTrajectories_{\abstractMDP, \hlmPolicy}^{\abstractInitialState, \abstractSuccessState}} \measure_{\mdp, \abstractPolicy}^{\mdpInitialState}(Cyl(\reachTrajectories_{\mdp, \abstractPolicy}(\abstractState_0 \controller_0 ... \abstractState_{\numMetaDecision}))) &\geq \sum_{\abstractState_0 \controller_0 ... \abstractState_{\numMetaDecision} \in \reachTrajectories_{\abstractMDP, \hlmPolicy}^{\abstractInitialState, \abstractSuccessState}} \measure_{\abstractMDP, \hlmPolicy}^{\abstractInitialState}(Cyl(\abstractState_0 \controller_0 ... \abstractState_{\numMetaDecision}))\\
    &= \mathbb{P}_{\abstractMDP, \hlmPolicy}^{\abstractInitialState}(\Diamond \abstractSuccessState).
\end{align}

We note that for any such path fragment \(\abstractState_0 \controller_0 ...\abstractState_{\numMetaDecision} \in \reachTrajectories_{\abstractMDP, \hlmPolicy}^{\abstractInitialState, \abstractSuccessState}\), if follows from the definition of the HLM goal state \(\abstractState_{\numMetaDecision} = \abstractSuccessState\) that \(\reachTrajectories_{\mdp, \abstractPolicy}(\abstractState_0 \controller_0 ... \abstractState_{\numMetaDecision}) \subseteq \reachTrajectories_{\mdp, \abstractPolicy}^{\mdpInitialState, \controllerFinalStateSet_{targ}}\).
Furthermore, the cylinder sets of the compatible environment path fragments \(Cyl(\reachTrajectories_{\mdp, \abstractPolicy}(\abstractState_0 \controller_0 ... \abstractState_{\numMetaDecision}))\) for every HLM path fragment \(\abstractState_0 \controller_0 ...\abstractState_{\numMetaDecision} \in \reachTrajectories_{\abstractMDP, \hlmPolicy}^{\abstractInitialState, \abstractSuccessState}\) are pairwise disjoint. So, we conclude that 

\begin{align}
    \mathbb{P}_{\mdp, \abstractPolicy}^{\mdpInitialState}(\Diamond \controllerFinalStateSet_{targ}) \geq \sum_{\abstractState_0 \controller_0 ... \abstractState_{\numMetaDecision} \in \reachTrajectories_{\abstractMDP, \hlmPolicy}^{\abstractInitialState, \abstractSuccessState}} \measure_{\mdp, \abstractPolicy}^{\mdpInitialState}(Cyl(\reachTrajectories_{\mdp, \abstractPolicy}(\abstractState_0 \controller_0 ... \abstractState_{\numMetaDecision}))) \geq \mathbb{P}_{\abstractMDP, \hlmPolicy}^{\abstractInitialState}(\Diamond \abstractSuccessState).
\end{align}

\end{proof}
\section{Further Experimental Details}

In this section we present details surrounding the training of the individual RL subsystems, additional results plots for the continuous labyrinth experiment, and the results of training the composite system multiple times with different random seeds.
Figure \ref{fig:supp_labyrinth_environments} illustrates both the discrete and continuous labyrinth environment implementations that were used for testing.

\paragraph{Training the RL Subsystems. }
Each RL subsystem is trained to reach its exit conditions, given it begins in one of its entry conditions. 
In the discrete labyrinth environment, these entry and exit conditions correspond to finite collections of states -- locations and orientations within the labyrinth gridworld.
The reward signals used to train the subsystems simply return \(1.0\) when the corresponding exit state(s) have been reached and 0.0 otherwise.
Once an exit condition is reached, the episode ends; it is thus impossible for the subsystems to receive more than a reward of 1.0 per episode of training.
In the continuous labyrinth environment, these entry and exit conditions correspond to compact subsets of points \((x,y,\dot{x}, \dot{y}) \in \mathbb{R}^4\) such that \(\sqrt{(x - x_{\controller})^2 + (y-y_{\controller})^2} \leq 0.5m\) and \(\sqrt{\dot{x}^2 + \dot{y}^2} \leq 0.5\frac{m}{s}\). Here, \((x,y)\) and \((\dot{x}, \dot{y})\) correspond to the position and velocity values, respectively, and \(x_{\controller}\), \(y_{\controller}\) is a pre-specified entry or exit position.
In the continuous environment a more dense reward signal is required for training the subsystems.
If \((x_{\controller}, y_{\controller})\) is the exit position of subsystem \(\controller\), then the subsystem receives a reward of \(-\sqrt{(x - x_{\controller})^2 + (y-y_{\controller})}\) per timestep that it has not reached an exit condition, and it receives a reward of 100 when it reaches an exit condition.
To prevent the agent from learning to purposefully move into the lava in order to end the episode early, we additionally penalize the agent with a reward of -10,000 if it touches the lava.
In both the discrete and the continuous labyrinth environments, if the subsystem collides with any of the lava, then the episode ends and the subsystem has no chance at receiving positive reward in that episode.

To train each RL subsystem, we used the Stable-Baselines3 \cite{stable-baselines3} implementation of the proximal policy optimization (PPO) algorithm \cite{schulman2017proximal} with default parameters.
The values of these algorithm parameters for both the discrete and the continuous labyrinth environments are listed in Table \ref{tab:ppo_params}.

\definecolor{color0}{rgb}{0.12156862745098,0.466666666666667,0.705882352941177}
\definecolor{color1}{rgb}{0.682352941176471,0.780392156862745,0.909803921568627}
\definecolor{color2}{rgb}{1,0.498039215686275,0.0549019607843137}
\definecolor{color3}{rgb}{1,0.733333333333333,0.470588235294118}
\definecolor{color4}{rgb}{0.172549019607843,0.627450980392157,0.172549019607843}
\definecolor{color5}{rgb}{0.596078431372549,0.874509803921569,0.541176470588235}
\definecolor{color6}{rgb}{0.83921568627451,0.152941176470588,0.156862745098039}
\definecolor{color7}{rgb}{1,0.596078431372549,0.588235294117647}
\definecolor{color8}{rgb}{0.580392156862745,0.403921568627451,0.741176470588235}
\definecolor{color9}{rgb}{0.772549019607843,0.690196078431373,0.835294117647059}
\definecolor{color10}{rgb}{0.549019607843137,0.337254901960784,0.294117647058824}
\definecolor{color11}{rgb}{0.768627450980392,0.611764705882353,0.580392156862745}

\definecolor{highlightgrey}{gray}{0.85}

\begin{table}[b]
\centering
 \begin{tabular}{|c | c || c | c || c | c |} 
 \hline
 Learning rate & 
 2.5e-4 &
 Steps per update & 
 512 &
 Minibatch size & 
 64 \\
 \hline\hline
 \pbox{20cm}{Number of epochs when\\optimizing surrogate loss} &
 10 & 
 Discount factor (\(\gamma\)) & 
 0.99 & 
 GAE parameter (\(\lambda\)) & 
 0.95 \\
 \hline\hline
 Clipping parameter
 & 0.2 
 & \pbox{20cm}{Value function\\ coefficient}
 & 0.5 
 & Max gradient norm 
 & 0.5 \\ 
 \hline
\end{tabular}
\vspace*{0.25cm}
\caption{\label{tab:ppo_params} PPO algorithm parameter values used for the training of the RL subsystems.}
\vspace{-0.4cm}
\end{table}

During each loop of Algorithm \ref{alg:ICRL}, a particular subsystem \(\controller\) is selected to train. The selected subsystem is then trained using the PPO algorithm for \(N_{train} = 50,000\) training steps. 
After its training, a new estimate \(\controllerPerformanceEstimate_{\controller}\) of its probability of subtask success is obtained by rolling out the subsystem's learned policy \(300\) separate times from states sampled uniformly from the set of states representing the subsystem's entry conditions.
Each subsystem is given a maximum allowable training budget of \(N_{max}=500,000\) training steps before its most recent estimated performance value \(\controllerPerformanceEstimate_{\controller}\) is added as an upper bound constraint on the corresponding parameter value \(\bernoulliProb_{\controller}\) in the bilinear program \eqref{eq:hlm_opt_objective}-\eqref{eq:hlm_opt_ub_constraints} (as described in lines 10-11 in Algorithm \ref{alg:ICRL}).

\paragraph{Hardware Resources Used. } 
All experiments were run locally on a laptop computer with an Intel i9-11900H 2.5 GHz CPU, an NVIDIA RTX 3060 GPU, and with 16 GB of RAM. 
A complete training run for the discrete gridworld labyrinth, which consists of roughly 1,500,000 total training iterations across all subsystems, takes approximately 25 minutes of wall-clock time.
A complete training run for the continuous labyrinth environment, which consists of roughly 800,000 total environment interactions across all subsystems, takes approximately 60 minutes of wall-clock time.

\begin{figure*}[t!]
    \centering
    \begin{subfigure}[t]{0.45\textwidth}
        \centering \includegraphics[height=5cm]{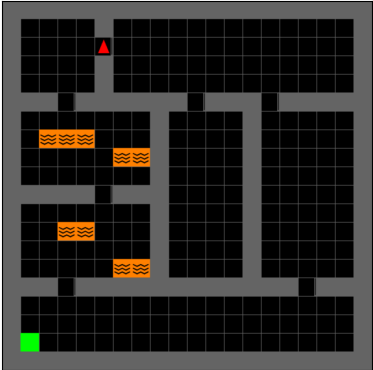}
        \caption{The discrete labyrinth environment.}
        \vspace*{0.5cm}
        \label{fig:supp_discrete_labyrinth_environment}
    \end{subfigure}%
    ~
    \hspace*{\fill}
    \begin{subfigure}[t]{0.45\textwidth}
        \centering \includegraphics[height=5cm]{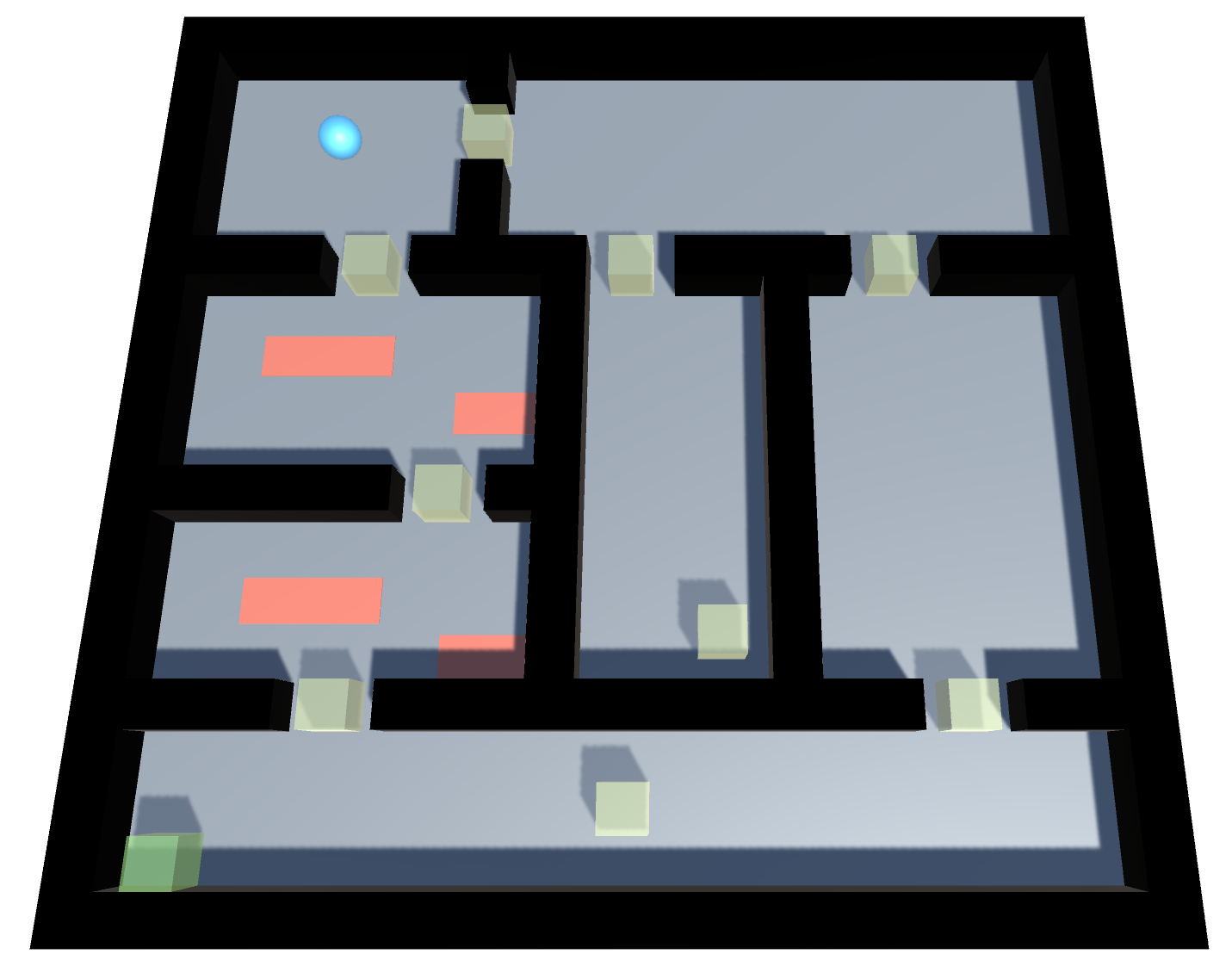}
        \caption{The continuous labyrinth environment.}
        \vspace*{0.5cm}
        \label{fig:supp_continuous_labyrinth_environment}
    \end{subfigure}
    \caption{The labyrinth environments.} 
    \label{fig:supp_labyrinth_environments}
\end{figure*}

\paragraph{Results of Running Experiment with Different Random Seeds. }
To assess the variance in the result of training the system using Algorithm \ref{alg:ICRL}, we ran the presented labyrinth experiment 10 separate times with different random seeds. 
Figure \ref{fig:error_bars} visualizes the results. 
To avoid unnecessary visual clutter, we did not include information pertaining to the individual subsystems in this figure. 
The solid lines indicate the median values across all training runs, while the borders of the shaded region indicate the \(25^{th}\) and \(75^{th}\) percentiles.
We observe high variability in the system's probaility of task success between \(0.6e6\) and \(1.2e6\) total elapsed training steps.
However, we note that by \(1.4e6\) training steps, all runs converge to system behavior that satisfies the overall task specification.
The variance observe during training is mostly due to the subsystem 4, illustrated by dark green in Figure \ref{fig:labyrinth_gridworld}, that is tasked with navigating the top lava room. 
In some instances, after roughly \(0.6e6\) total training steps, the subsystem learns to navigate past the lava with probability of roughly 0.8 and the entire system's performance rises accordingly.
These instances correspond to the top of the shaded region.
Conversely, in some instances subsystem 4 doesn't learn to navigate past the lava within its allowed training budget with any probability of success; these instances correspond to the bottom of the shaded region.
In both cases, subsystem 4 eventually exhausts its training budget without learning to satisfy its subtask specification, in which case alternate subsystems are trained, causing the system's probability of task success to rise above the required threshold after about \(1.4e6\) total training iterations.

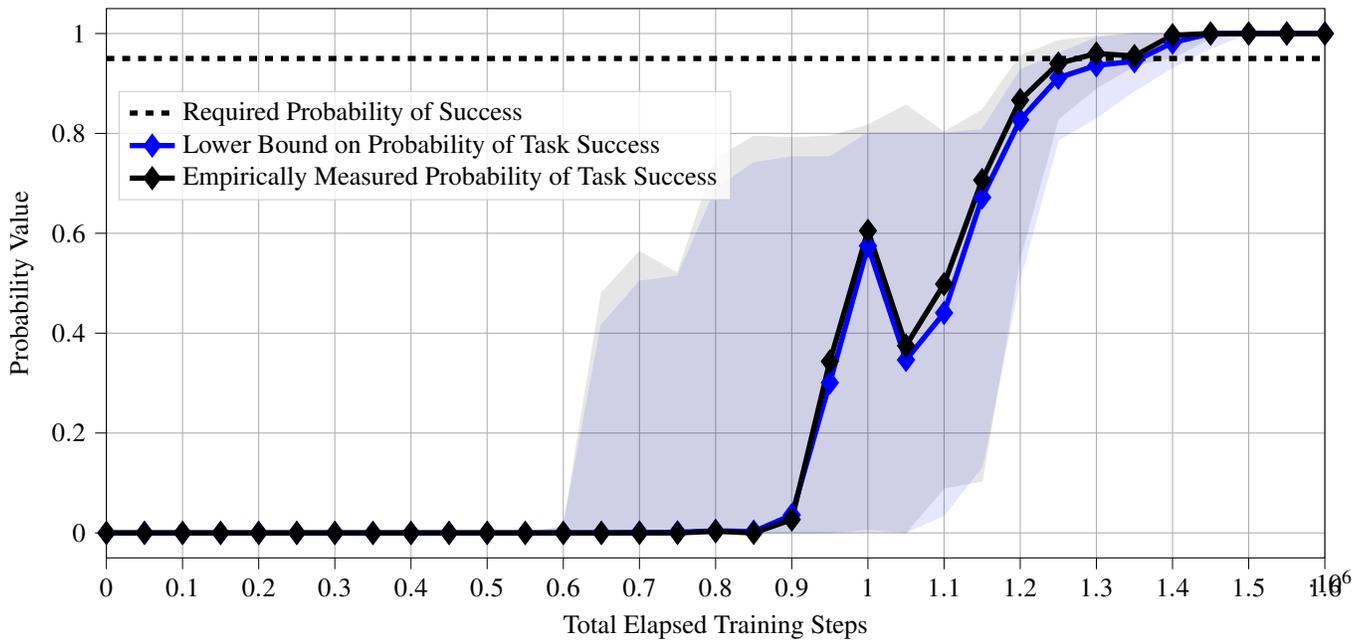
\begin{figure}
    \centering
\begin{tikzpicture}

\begin{axis}[
width=\textwidth,
height=0.5\textwidth,
legend cell align={left},
legend style={
  fill opacity=0.8,
  draw opacity=1,
  text opacity=1,
  at={(0.01, 0.85)},
  anchor=north west,
  draw=white!80!black
},
tick align=outside,
tick pos=left,
x grid style={white!69.0196078431373!black},
xmajorgrids,
xmin=0, xmax=1600000,
xtick style={color=black},
y grid style={white!69.0196078431373!black},
ymajorgrids,
ymin=-0.05, ymax=1.05,
ytick style={color=black},
xlabel={Total Elapsed Training Steps},
ylabel={Probability Value},
every x tick scale label/.style={
    at={(1,0)},xshift=-6.5pt,yshift=-13.5pt,anchor=south west,inner sep=0pt}
]
\path [draw=blue, fill=blue, opacity=0.1]
(axis cs:0,0)
--(axis cs:0,0)
--(axis cs:50000,0)
--(axis cs:100000,0)
--(axis cs:150000,0)
--(axis cs:200000,0)
--(axis cs:250000,0)
--(axis cs:300000,0)
--(axis cs:350000,0)
--(axis cs:400000,0)
--(axis cs:450000,0)
--(axis cs:500000,0)
--(axis cs:550000,0)
--(axis cs:600000,0)
--(axis cs:650000,0)
--(axis cs:700000,0)
--(axis cs:750000,5.48198592592593e-05)
--(axis cs:800000,0.000491630498765432)
--(axis cs:850000,7.4717337037037e-05)
--(axis cs:900000,0.000457878928395062)
--(axis cs:950000,0.000634240883950617)
--(axis cs:1000000,0.00739306376419753)
--(axis cs:1050000,0.00138775649876543)
--(axis cs:1100000,0.0354982714240741)
--(axis cs:1150000,0.132565823812963)
--(axis cs:1200000,0.510187218787037)
--(axis cs:1250000,0.786975925925926)
--(axis cs:1300000,0.83144665695)
--(axis cs:1350000,0.885954586183333)
--(axis cs:1400000,0.932102267777778)
--(axis cs:1450000,0.97042256)
--(axis cs:1500000,1)
--(axis cs:1550000,1)
--(axis cs:1600000,1)
--(axis cs:1600000,1)
--(axis cs:1600000,1)
--(axis cs:1550000,1)
--(axis cs:1500000,1)
--(axis cs:1450000,1)
--(axis cs:1400000,1)
--(axis cs:1350000,1)
--(axis cs:1300000,0.991763426666667)
--(axis cs:1250000,0.959702791666666)
--(axis cs:1200000,0.926842843012346)
--(axis cs:1150000,0.807515037037037)
--(axis cs:1100000,0.800635037037037)
--(axis cs:1050000,0.800635037037037)
--(axis cs:1000000,0.800635037037037)
--(axis cs:950000,0.752925694444444)
--(axis cs:900000,0.752925694444444)
--(axis cs:850000,0.741054138888889)
--(axis cs:800000,0.687890916666667)
--(axis cs:750000,0.513824833333333)
--(axis cs:700000,0.504047314814815)
--(axis cs:650000,0.416246182098766)
--(axis cs:600000,0.011067)
--(axis cs:550000,7.378e-05)
--(axis cs:500000,3.31604938271605e-07)
--(axis cs:450000,0)
--(axis cs:400000,0)
--(axis cs:350000,0)
--(axis cs:300000,0)
--(axis cs:250000,0)
--(axis cs:200000,0)
--(axis cs:150000,0)
--(axis cs:100000,0)
--(axis cs:50000,0)
--(axis cs:0,0)
--cycle;

\path [draw=black, fill=black, opacity=0.1]
(axis cs:0,0)
--(axis cs:0,0)
--(axis cs:50000,0)
--(axis cs:100000,0)
--(axis cs:150000,0)
--(axis cs:200000,0)
--(axis cs:250000,0)
--(axis cs:300000,0)
--(axis cs:350000,0)
--(axis cs:400000,0)
--(axis cs:450000,0)
--(axis cs:500000,0)
--(axis cs:550000,0)
--(axis cs:600000,0)
--(axis cs:650000,0)
--(axis cs:700000,0)
--(axis cs:750000,0)
--(axis cs:800000,0)
--(axis cs:850000,0)
--(axis cs:900000,0)
--(axis cs:950000,0)
--(axis cs:1000000,0.00166666666666667)
--(axis cs:1050000,0.000833333333333333)
--(axis cs:1100000,0.09)
--(axis cs:1150000,0.104166666666667)
--(axis cs:1200000,0.551666666666667)
--(axis cs:1250000,0.829166666666667)
--(axis cs:1300000,0.890833333333333)
--(axis cs:1350000,0.935833333333333)
--(axis cs:1400000,0.951666666666667)
--(axis cs:1450000,0.99)
--(axis cs:1500000,1)
--(axis cs:1550000,1)
--(axis cs:1600000,1)
--(axis cs:1600000,1)
--(axis cs:1600000,1)
--(axis cs:1550000,1)
--(axis cs:1500000,1)
--(axis cs:1450000,1)
--(axis cs:1400000,1)
--(axis cs:1350000,1)
--(axis cs:1300000,0.995)
--(axis cs:1250000,0.985833333333333)
--(axis cs:1200000,0.955)
--(axis cs:1150000,0.846666666666667)
--(axis cs:1100000,0.8025)
--(axis cs:1050000,0.856666666666667)
--(axis cs:1000000,0.816666666666667)
--(axis cs:950000,0.794166666666667)
--(axis cs:900000,0.791666666666667)
--(axis cs:850000,0.794166666666667)
--(axis cs:800000,0.749166666666667)
--(axis cs:750000,0.52)
--(axis cs:700000,0.563333333333333)
--(axis cs:650000,0.48)
--(axis cs:600000,0.0125)
--(axis cs:550000,0)
--(axis cs:500000,0)
--(axis cs:450000,0)
--(axis cs:400000,0.0025)
--(axis cs:350000,0)
--(axis cs:300000,0)
--(axis cs:250000,0)
--(axis cs:200000,0)
--(axis cs:150000,0)
--(axis cs:100000,0)
--(axis cs:50000,0)
--(axis cs:0,0)
--cycle;

\addplot [line width=2pt, black, dashed]
table {%
0 0.95
1600000 0.95
};
\addlegendentry{Required Probability of Success}
\addplot [line width=2pt, blue, mark=diamond*, mark size=3, mark options={solid}]
table {%
0 0
50000 0
100000 0
150000 0
200000 0
250000 0
300000 0
350000 0
400000 0
450000 0
500000 0
550000 0
600000 0
650000 1.24172913580247e-05
700000 0.000886364197530864
750000 0.00115848219753086
800000 0.0041292057037037
850000 0.00265172222222222
900000 0.0358211983333333
950000 0.300859478466667
1000000 0.574904473353086
1050000 0.346476144833333
1100000 0.440576783333333
1150000 0.671817285185185
1200000 0.827085996296296
1250000 0.911347217845679
1300000 0.936313885802469
1350000 0.944366380308642
1400000 0.98187139
1450000 1
1500000 1
1550000 1
1600000 1
};
\addlegendentry{Lower Bound on Probability of Task Success}
\addplot [line width=2pt, black, mark=diamond*, mark size=3, mark options={solid}]
table {%
0 0
50000 0
100000 0
150000 0
200000 0
250000 0
300000 0
350000 0
400000 0
450000 0
500000 0
550000 0
600000 0
650000 0
700000 0
750000 0
800000 0.00333333333333333
850000 0
900000 0.0266666666666667
950000 0.343333333333333
1000000 0.605
1050000 0.375
1100000 0.498333333333333
1150000 0.706666666666667
1200000 0.866666666666667
1250000 0.94
1300000 0.96
1350000 0.955
1400000 0.996666666666667
1450000 1
1500000 1
1550000 1
1600000 1
};
\addlegendentry{Empirically Measured Probability of Task Success}
\end{axis}

\end{tikzpicture}
    \caption{Results of running the discrete gridworld labyrinth experiment with 10 different random seeds. The HLM-predicted probability of task success is plotted in blue, while empirical estimates of the system's probability of task success are plotted in black. The solid line visualizes the median values across all runs, while the borders of the shaded regions visualize the \(25^{th}\) and \(75^{th}\) percentiles.}
    \label{fig:error_bars}
\end{figure}

\paragraph{Additional Results Plots for the Continuous Labyrinth Environment.}
Figures \ref{fig:supp_cont_lab_training_curves} and \ref{fig:supp_cont_lab_training_schedule} illustrate the results of applying the ICRL framework to the continuous labyrinth environment implemented in \textit{Unity} and described in the numerical experiments section of the paper.
In comparison with the results of the discrete labyrinth environment, we note that the total number of elapsed training steps and the exact order of the training for the subsystems is slightly different.
However, the overall result of ICRL's training procedure is very similar between the discrete and continuous versions of the labyrinth environment, despite the marked difference in the environment states, actions, and dynamics.
These similarities help to illustrate the generality of the proposed framework; the ICRL algorithm is indeed agnostic to the specific implementations of the individual RL subsystems.

\begin{figure*}[t!]
    \centering
    \begin{subfigure}[t]{0.45\textwidth}
        \centering 
\begin{tikzpicture}

\definecolor{color0}{rgb}{0.12156862745098,0.466666666666667,0.705882352941177}
\definecolor{color1}{rgb}{0.682352941176471,0.780392156862745,0.909803921568627}
\definecolor{color2}{rgb}{1,0.498039215686275,0.0549019607843137}
\definecolor{color3}{rgb}{1,0.733333333333333,0.470588235294118}
\definecolor{color4}{rgb}{0.172549019607843,0.627450980392157,0.172549019607843}
\definecolor{color5}{rgb}{0.596078431372549,0.874509803921569,0.541176470588235}
\definecolor{color6}{rgb}{0.83921568627451,0.152941176470588,0.156862745098039}
\definecolor{color7}{rgb}{1,0.596078431372549,0.588235294117647}
\definecolor{color8}{rgb}{0.580392156862745,0.403921568627451,0.741176470588235}
\definecolor{color9}{rgb}{0.772549019607843,0.690196078431373,0.835294117647059}
\definecolor{color10}{rgb}{0.549019607843137,0.337254901960784,0.294117647058824}
\definecolor{color11}{rgb}{0.768627450980392,0.611764705882353,0.580392156862745}

\begin{axis}[
tick align=inside,
tick pos=left,
x grid style={white!69.0196078431373!black},
  ticklabel style = {font=\footnotesize},
xmajorgrids,
xmin=0, xmax=750000,
xtick style={color=black},
y grid style={white!69.0196078431373!black},
ymajorgrids,
ymin=-0.05, ymax=1.05,
height=4.0cm,
width=0.99\textwidth,
ytick={0.0, 0.2, 0.4, 0.6, 0.8, 1.0},
yticklabels={0.0, 0.2, 0.4, 0.6, 0.8, 1.0},
xtick={0, 100000, 200000, 300000, 400000, 500000, 600000, 800000, 1000000, 1200000, 1400000},
xticklabels={0.0, 1.0, 2.0, 3.0, 4.0, 5.0, 6.0, 8.0, 10.0, 12.0, },
ytick style={color=black},
legend to name=named,
every x tick scale label/.style={
    at={(1,0)},xshift=-15.5pt,yshift=-10.0pt,anchor=south west,inner sep=0pt},
ylabel={Probability Value},
xlabel={Elapsed Total Training Steps}
]
\addplot [very thick, color0, mark=diamond*, mark size=2, mark options={solid}, forget plot]
table {%
0 0
0 0
50000 0.998
100000 0.998
150000 0.998
200000 0.998
250000 0.998
300000 0.998
350000 0.998
400000 0.998
450000 0.998
500000 0.998
550000 0.998
600000 0.998
650000 0.998
700000 0.998
750000 0.998
};
\addplot [very thick, color1, mark=diamond*, mark size=2, mark options={solid}, forget plot]
table {%
0 0
0 0
50000 0
100000 0
150000 0
200000 0
250000 0
300000 0
350000 0
400000 0
450000 0
500000 0
550000 0
600000 1
650000 1
700000 1
750000 1
};
\addplot [very thick, color2, mark=diamond*, mark size=2, mark options={solid}, forget plot]
table {%
0 0
0 0
50000 0
100000 0
150000 0
200000 0
250000 0
300000 0
350000 0
400000 0
450000 0
500000 0
550000 0
600000 0
650000 0
700000 0
750000 0
};
\addplot [very thick, color3, mark=diamond*, mark size=2, mark options={solid}, forget plot]
table {%
0 0
0 0
50000 0
100000 0
150000 0
200000 0
250000 0
300000 0
350000 0
400000 0
450000 1
500000 1
550000 1
600000 1
650000 1
700000 1
750000 1
};
\addplot [very thick, color4, mark=diamond*, mark size=2, mark options={solid}, forget plot]
table {%
0 0
0 0
50000 0
100000 0
150000 0
200000 0
250000 0
300000 0
350000 0
400000 0
450000 0
500000 0
550000 0
600000 0
650000 0
700000 0
750000 0
};
\addplot [very thick, color5, mark=diamond*, mark size=2, mark options={solid}, forget plot]
table {%
0 0
0 0
50000 0
100000 0
150000 0
200000 0
250000 0
300000 0
350000 0
400000 0
450000 0
500000 0
550000 0
600000 0
650000 0
700000 0
750000 0
};
\addplot [very thick, color6, mark=diamond*, mark size=2, mark options={solid}, forget plot]
table {%
0 0
0 0
50000 0
100000 0
150000 0
200000 0
250000 0
300000 0
350000 0
400000 0
450000 0
500000 0
550000 0
600000 0
650000 0
700000 0
750000 0
};
\addplot [very thick, color7, mark=diamond*, mark size=2, mark options={solid}, forget plot]
table {%
0 0
0 0
50000 0
100000 0
150000 0
200000 0
250000 0
300000 0
350000 0
400000 0
450000 0
500000 0
550000 0
600000 0
650000 0
700000 0
750000 0
};
\addplot [very thick, color8, mark=diamond*, mark size=2, mark options={solid}, forget plot]
table {%
0 0
0 0
50000 0
100000 0
150000 0
200000 0
250000 0
300000 0
350000 0
400000 0
450000 0
500000 1
550000 1
600000 1
650000 1
700000 1
750000 1
};
\addplot [very thick, color9, mark=diamond*, mark size=2, mark options={solid}, forget plot]
table {%
0 0
0 0
50000 0
100000 0
150000 0
200000 0
250000 0
300000 0
350000 0
400000 0
450000 0
500000 0
550000 0
600000 0
650000 0
700000 0
750000 0
};
\addplot [very thick, color10, mark=diamond*, mark size=2, mark options={solid}, forget plot]
table {%
0 0
0 0
50000 0
100000 0
150000 0
200000 0
250000 0
300000 0
350000 1
400000 1
450000 1
500000 1
550000 1
600000 1
650000 1
700000 1
750000 1
};
\addplot [very thick, color11, mark=diamond*, mark size=2, mark options={solid}, forget plot]
table {%
0 0
0 0
50000 0
100000 0
150000 0
200000 0
250000 0
300000 0
350000 0
400000 0
450000 0
500000 0
550000 0.565
600000 0.565
650000 0.835
700000 0.923
750000 0.986
};
\addplot [line width=2pt, black, dashed]
table {%
0 0.95
750000 0.95
};
\addlegendentry{Required Probability of Success}
\addplot [line width=2pt, blue, mark=diamond*, mark size=3, mark options={solid}]
table {%
0 0
0 0
50000 0
100000 0
150000 0
200000 0
250000 0
300000 0
350000 0
400000 0
450000 0
500000 0
550000 0
600000 0.565
650000 0.835
700000 0.923
750000 0.986
};
\addlegendentry{Lower Bound on Probability of Task Success}
\addplot [line width=2pt, black, mark=diamond*, mark size=3, mark options={solid}]
table {%
0 0
0 0
50000 0
100000 0
150000 0
200000 0
250000 0
300000 0
350000 0
400000 0
450000 0
500000 0
550000 0
600000 0.641
650000 1
700000 1
750000 1
};
\addlegendentry{Empirically Measured Probability of Task Success}
\addplot [line width=4pt, red, dashed, forget plot]
table {%
250000 -0.05
250000 1.05
};
\end{axis}

\end{tikzpicture}
        \caption{Estimated task and subtask success probabilities during training.}
        \vspace*{0.5cm}
        \label{fig:supp_cont_lab_training_curves}
    \end{subfigure}%
    ~
    \hspace*{\fill}
    \begin{subfigure}[t]{0.45\textwidth}
        \centering 
\begin{tikzpicture}

\definecolor{color0}{rgb}{0.12156862745098,0.466666666666667,0.705882352941177}
\definecolor{color1}{rgb}{0.596078431372549,0.874509803921569,0.541176470588235}
\definecolor{color2}{rgb}{0.549019607843137,0.337254901960784,0.294117647058824}
\definecolor{color3}{rgb}{1,0.733333333333333,0.470588235294118}
\definecolor{color4}{rgb}{0.580392156862745,0.403921568627451,0.741176470588235}
\definecolor{color5}{rgb}{0.768627450980392,0.611764705882353,0.580392156862745}
\definecolor{color6}{rgb}{0.682352941176471,0.780392156862745,0.909803921568627}

\begin{axis}[
height=4.0cm,
width=0.99\textwidth,
tick align=inside,
tick pos=left,
x grid style={white!69.0196078431373!black},
ticklabel style = {font=\footnotesize},
xmajorgrids,
xmin=0, xmax=750000,
xtick style={color=black},
y grid style={white!69.0196078431373!black},
ymajorgrids,
ymin=-0.55, ymax=11.55,
ytick style={color=black},
ytick={0,1,2,3,4,5,6,7,8,9,10,11},
yticklabels = {0,1,2,3,4,5,6,7,8,9,10,11},
xtick={0, 100000, 200000, 300000, 400000, 500000, 600000, 800000, 1000000, 1200000, 1400000},
xticklabels={0.0, 1.0, 2.0, 3.0, 4.0, 5.0, 6.0, 8.0, 10.0, 12.0, },
every x tick scale label/.style={
    at={(1,0)},xshift=-15.5pt,yshift=-10.0pt,anchor=south west,inner sep=0pt},
ylabel={Subsystem Index},
xlabel={Elapsed Total Training Steps}
]
\addplot [line width=8pt, color0]
table {%
0 0
50000 0
};
\addplot [line width=8pt, color1]
table {%
50000 4
100000 4
};
\addplot [line width=8pt, color1]
table {%
100000 4
150000 4
};
\addplot [line width=8pt, color1]
table {%
150000 4
200000 4
};
\addplot [line width=8pt, color1]
table {%
200000 4
250000 4
};
\addplot [line width=8pt, color2]
table {%
250000 10
300000 10
};
\addplot [line width=8pt, color2]
table {%
300000 10
350000 10
};
\addplot [line width=8pt, color3]
table {%
350000 3
400000 3
};
\addplot [line width=8pt, color3]
table {%
400000 3
450000 3
};
\addplot [line width=8pt, color4]
table {%
450000 8
500000 8
};
\addplot [line width=8pt, color5]
table {%
500000 11
550000 11
};
\addplot [line width=8pt, color6]
table {%
550000 1
600000 1
};
\addplot [line width=8pt, color5]
table {%
600000 11
650000 11
};
\addplot [line width=8pt, color5]
table {%
650000 11
700000 11
};
\addplot [line width=8pt, color5]
table {%
700000 11
750000 11
};
\addplot [line width=5.6pt, red, dashed]
table {%
250000 -0.55
250000 11.55
};
\end{axis}

\end{tikzpicture}
        \caption{Automatically generated subsystem training schedule.}
        \vspace*{0.5cm}
        \label{fig:supp_cont_lab_training_schedule}
    \end{subfigure}
    \caption{
        Results for the continuous labyrinth environment. 
        Each subtask is represented by a different color, matching those used in Figure \ref{fig:labyrinth_gridworld}.
        Similarly to as with the discrete labyrinth environment, ICRL initially attempts to train the subsystems \(\controller_0, \controller_4, \controller_5,\) and \(\controller_9\), which are used to move straight down through the rooms containing lava to reach the goal. 
        The dotted red line illustrates the point in training at which the ICRL algorithm automatically refines the subtask specifications, resulting instead in the training of the subsystems \(\controller_1, \controller_3, \controller_8, \controller_{10}\), and \(\controller_{11}\).
    } 
\end{figure*}
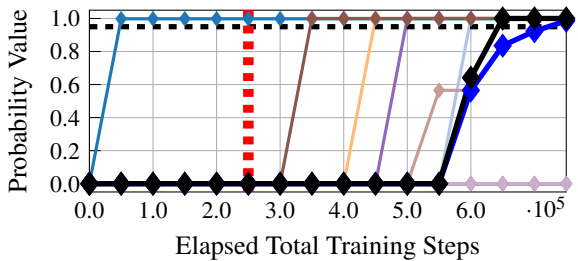
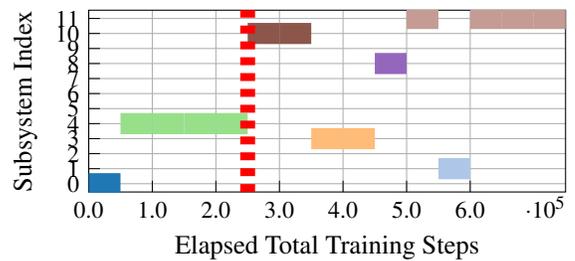

\end{document}